\documentclass{amsart}
\usepackage{amsmath}
\usepackage{caption}
\usepackage{booktabs}
\usepackage{amssymb}
\usepackage{comment}
\usepackage{color}
\usepackage{float}
\usepackage{layout}
\usepackage{amsthm}
\usepackage{bbm}
\usepackage{bm}
\usepackage{xy}
\usepackage{cleveref}
\usepackage{enumitem}
\usepackage[ruled,vlined,linesnumbered,boxed]{algorithm2e}
\usepackage{mathrsfs}
\usepackage[gen]{eurosym}
\usepackage{graphicx}
\usepackage{wrapfig}
\usepackage{todonotes}
\usepackage[section]{placeins}

\setlength{\marginparwidth}{2.2cm}
\newcommand{\seb}[1]{}
\newcommand{\yichao}[1]{}
\newcommand{\notesteven}[1]{}
\newcommand{\note}[1]{}

\setlength{\textwidth}{\paperwidth}
\addtolength{\textwidth}{-2in}
\calclayout



\theoremstyle{plain}
\newtheorem{theorem}{Theorem}[section]

\newtheorem{lemma}[theorem]{Lemma}

\newtheorem{definition}[theorem]{Definition}

\newtheorem{assumption}[theorem]{Assumption}
\theoremstyle{remark}
\newtheorem{remark}[theorem]{Remark}

\numberwithin{equation}{section}

\newcommand{\mcA}{{\mathcal{A}}}

\newcommand{\mcK}{{\mathcal{K}}}

\newcommand{\mcP}{{\mathcal{P}}}

\newcommand{\Ff}{\mathcal{F}}

\newcommand{\Gg}{\mathcal{G}}

\newcommand{\PP}{\mathbb{P}}
\newcommand{\EE}{\mathbb{E}}

\newcommand{\RR}{\mathbb{R}}

\newcommand{\mfT}{{\mathfrak{T}}}
\newcommand{\mfN}{{\mathfrak{N}}}

\crefname{algorithm}{Algorithm}{Algorithm}

\newcommand\myprime{\mkern-3.5mu\raise0.6ex\hbox{$\scriptstyle\prime$}}

\makeatletter
\newcommand*\bigcdot{\mathpalette\bigcdot@{.8}}
\newcommand*\bigcdot@[2]{\mathbin{\vcenter{\hbox{\scalebox{#2}{$\m@th#1\bullet$}}}}}
\makeatother

\begin{document}
\title[Deep Learning for PA-MFGs]{Deep Learning for Principal-Agent Mean Field Games
}

\begin{abstract}
Here, we develop a deep learning algorithm for solving Principal-Agent (PA) mean field games with market-clearing conditions -- a class of problems that have thus far not been studied and one that poses difficulties for standard numerical methods. We use an actor-critic approach to optimization, where the agents form a Nash equilibria according to the principal's penalty function, and the principal evaluates the resulting equilibria. The inner problem's Nash equilibria is obtained using a variant of the deep backward stochastic differential equation (BSDE) method modified for McKean-Vlasov forward-backward SDEs that includes dependence on the distribution over both the forward and backward processes. The outer problem's loss is further approximated by a neural net by sampling over the space of penalty functions. We apply our approach to a stylized PA problem arising in Renewable Energy Certificate (REC) markets, where agents may rent clean energy production capacity, trade RECs, and expand their long-term capacity to navigate the market at maximum profit. Our numerical results illustrate the efficacy of the algorithm and lead to interesting insights into the nature of optimal PA interactions in the mean-field limit of these markets.
\end{abstract}


\author[S. Campbell]{Steven Campbell}
\thanks{SC would like to acknowledge support from the Natural Sciences and Engineering Research Council of Canada (Alexander Graham Bell Canada Graduate Scholarship Application No. CGSD3-535625-2019).}
\address{
Department of Statistical Sciences, University of Toronto.
}
\email{steven.campbell@mail.utoronto.ca}



\author[Y. Chen]{Yichao Chen}
\address{Department of Mathematics, University of Toronto.}
\email{yichao.chen@mail.utoronto.ca}

\author[A. Shrivats]{Arvind Shrivats}
\address{Department of Operations Research \& Financial Engineering, Princeton University.}
\email{shrivats@princeton.edu}

\author[S. Jaimungal]{Sebastian Jaimungal}
\thanks{SJ would like to acknowledge support from the Natural Sciences and Engineering Research Council of Canada (grants RGPIN-2018-05705, RGPAS-2018-522715).}
\address{Department of Statistical Sciences, University of Toronto.}
\email{sebastian.jaimungal@utoronto.ca}
\urladdr{http://sebastian.statistics.utoronto.ca}

\date{\today}

\maketitle %
\section{Introduction}

The study of equilibria in systems of interacting agents is a field that has commanded considerable attention in recent years. Understanding  how firms optimize their strategic decision making while interacting with other optimizing firms has broad applicability to areas in nature, social sciences, and finance.
In this work, we apply advances in deep learning (DL) to propose an algorithm that provides a solution to a broad class of complex problems regarding the equilibrium of a large number of small competing interactive agents, commonly known as mean field games (MFGs). Moreover, the class of problems we study incorporates additional layers of complexity, as we consider equilibria within a principal-agent (PA) formulation, otherwise known as a principal-agent mean field games (PA-MFGs). Furthermore, the problems that we study may include market clearing conditions which introduces an additional interaction into the PA-MFG problems that did not exist before. 

Specifically, we consider a PA-MFG problem where the principal induces an infinite collection of agents to work on their behalf through an incentivized payment structure. Given the principal's incentivized payment structure, the agents form a Nash equilibrium to optimize their goals (the inner problem). The principal then aims to find a payment structure that best incentivizes the agents for the principal's needs (the outer problem). This is a highly non-trivial problem, both in finding the Nash equilibrium and in finding the principal's optimal incentive structure.

Our methodology utilizes deep neural networks (DNN) to solve a McKean-Vlasov Forward Backward Stochastic Differential Equation (MV-FBSDE) that is associated with the solution of the inner problem's MFG. Simultaneously, we dynamically estimate the principal's loss surface and implement a gradient descent algorithm to  solve the outer problem. The DNN parametrization of the MV-FBSDE's solution allows us to solve complex and high dimensional problems that classical numerical approaches, such as Least Squares Monte Carlo (LSMC) or iterative semi-analytic schemes, fail to provide efficient solutions to. Moreover, the dynamic estimation and concurrent updates for the inner MFG problem and outer PA problem expedites convergence to a solution.

An application of this particular problem is readily found in the world of environmental regulation, and particularly emissions markets. Examples of these include carbon cap-and-trade or Renewable Energy Certificate (REC) markets. Indeed, regulators may be viewed as principals who choose a market design to induce some combination of revenue generation and environmental compliance, with the regulated power generators being agents who interact with one another through the market and attempt to navigate the regulatory environment to maximize profit. Thus, the latter half of this paper discusses the PA-MFG problem in a REC market and presents the results of our proposed algorithm for a synthetic principal and set of agents.

\cite{shrivats2020optimal, carmona2010market, carmona2009optimal} study optimal behavior within REC (and C\&T) markets at the individual firm level, and \cite{seifert2008dynamic} from the perspective of a social planner. These market designs have been discussed in some of the aforementioned works (\cite{carmona2010market, carmona2009optimal} in particular) as well as in \cite{coulon2015smart, khazaei2017adapt}. To the best of the authors' knowledge, however, this is the first work to explicitly consider a PA-MFG approach to efficient emission market design. 

PA games in general are well-studied. Canonically, they consider a single principal and agent, with \cite{holmstrom1987aggregation} being an early rigorous contribution to this field. Recent years have seen renewed efforts, thanks in large part to \cite{sannikov2008continuous} who investigates a class of infinite-horizon PA games where the the principal rewards the agent continuously. \cite{sannikov2008continuous} leverages the dynamic nature of the agent’s value function arising from the dynamic programming principle. This allows the principal’s problem to be formulated as a tractable optimal control
problem, and this approach was  made rigorous and expanded upon in \cite{cvitanic2018dynamic} and \cite{cvitanic2017moral}, using second order BSDEs.

These works focus on the case where a single principal contracts a single agent. Many real-world situations (such as the aforementioned REC markets), however, involve a principal influencing a number of agents. With large populations of agents, finding equilibria is generally intractable, which motivates the idea of applying MFG theory to the  PA problem. This is exactly the motivation taken by \cite{elie2019tale}
and \cite{carmona2021finite}, the former in a continuous state space, and the latter in a discrete state space.

Naturally,  \cite{elie2019tale,carmona2021finite}, as well as our own, rely heavily on MFG theory. MFG theory started with the seminal works of \cite{huang2007large, huang2006large} and \cite{Lasry2006a, Lasry2006b, Lasry2007mean}. The methodology developed in these concurrent works provides approximate Nash equilibria in stochastic differential games with symmetric interactions and a large number of players. Any individual player is impacted by the others through the empirical distribution of their states, a fact which makes these problems generally intractable. The key methodological insight of MFG theory is that the problem is simplified by taking the number of players to infinity, as the empirical distribution can be replaced by a mean-field distribution. This allows for a Nash equilibrium to be found in the infinite player limit, which crucially can be proven to provide an approximate Nash equilibrium for the finite player game.

Over the past 15 years, MFGs have exploded in popularity, with numerous applications to engineering (\cite{aziz2016mean, KIZILKALE2019,Tembine2017}), economics (\cite{Gomes2014,Gomes2016}), and mathematical finance (\cite{CarmonaLacker2015, Mojtaba2015, ThesisDena2019, FirooziISDG2017, FirooziPakniyatCainesCDC2017, Cardaliaguet2018,Lehalle2019, casgrain2018mean, casgrain2020mean, Horst2018, David-Yuri2020}). There have also been efforts to apply MFGs to environmental regulation, as in \cite{shrivats2020mean}, as well as commodities (\cite{aid2017coordination, Mouzouni2019,Sircar2017, brown2017oil, ludkovski2017mean}).

The intersection of MFGs and DL has also grown recently. The probabilistic approach to solving for Nash equilibria in MFGs leads to an MV-FBSDE system. To solve FBSDE systems, the deep BSDE method \cite{han2018solving,weinan2017deep,beck2019machine} has been remarkably successful. A rigorous theoretical foundation for this method is developed in \cite{han2020convergencebsdes}. In a related setting, DL techniques have also been used to solve $N$-player stochastic differential games \cite{han2020deep,han2020convergence}. More recently, \cite{carmona2019convergence,carmona2021convergence} directly treat MFGs and mean-field type control (MFC) problems in an DL setting, propose efficient numerical algorithms for arriving at a solution, and carry out a detailed analysis of their convergence. Similarly, \cite{fouque2020deep} uses DL techniques to tackle MFC problems with delay. \cite{carmona2021deep} provides an excellent overview of techniques in this area and their applications to finance. 

The contributions of this work are threefold. Firstly, we further generalize the application of the deep BSDE method seen in \cite{han2018solving,weinan2017deep,beck2019machine,han2020convergencebsdes,carmona2019convergence,carmona2021convergence,carmona2021deep} to MV-FBSDEs of the type that have dependence on the law of the adjoint process. This arises, for example, in settings where there is market clearing in addition to equilibrium (see, e.g., \cite{shrivats2020mean}). Secondly, we develop an alternating optimization method to solve the principal's problem concurrently with the solution to the MV-FBSDEs. In the process, we introduce an equivalent formulation of the PA problem that lends itself to unconstrained optimization. To the best of our knowledge, this is the first application of DL techniques to the PA-MFG setting. Lastly, we implement our algorithm on a non-trivial PA-MFG problem of independent interest in REC markets. The result yields several interesting insights into the optimal interactions that arise in the mean-field regime.

The remainder of this paper is organized as follows. \Cref{sec:problem_formulation} provides the mathematical framework in which we operate, including the formal definition of the general principal agent problem we wish to solve. Following that, we present our algorithm in \Cref{sec:algorithm}. We conclude by discussing an application of our approach to environmental markets in \Cref{sec:example} and present the algorithm's solution to this problem in \Cref{sec:results}. 

\section{The General Principal Agent mean-field Game Problem} \label{sec:problem_formulation}

In a typical PA game, there are two classes of decision makers: (i) the agent, and (ii) the principal. The agent has the ability to influence some output process, denoted by $X$, through the modification of some control process, denoted by $\alpha$. The principal delegates the management of this output process to the agent and is assumed to derive some benefit from it. They compensate the agent at some terminal time $T$ for the output they produce. This provides an incentive for the agent to manage $X$ properly. Principals cannot observe the control of the agent. Rather, they only observe the agent's output process. This asymmetry of information leads to moral hazard. 

In this work, we assume the principal compensates the agent only through Markovian functions of the output process. 
This compensation may be negative, in which case funds flow from agent to principal. The problem the principal faces is: What is the optimal compensation structure to balance the output process that the agent manages and the amount the agent is compensated (or charged) for their efforts?

The agent wishes to maximize profit, treating the reward structure offered by the principal as exogenous. Typically, PA problems include a mechanism referred to as a reservation cost which ensures the agent receives a minimum reward for their effort. The reservation cost reflects agents preferences to work on behalf of a principal only if they can expect a minimum payment.  

A natural extension to the PA problem described above is to consider what happens when multiple agents are working on behalf of the principal -- such as those that arise in regulation and oversight. In this case, the principal (regulator) sets rules to oversee a set of agents (regulated firms) to induce desired agents' behavior.  Environmental regulation is a clear example of such a situation, and in \Cref{sec:example} we provide a specific example of it. Here, the reservation cost may represent the maximum burden the principal can place on the set of regulated agents for the regulations to be politically feasible.

In the multiple agent setting, the agents may also interact with each other, adding a complication to the optimization that an individual agent must solve to behave optimally. In essence, each agent must solve a multi-player stochastic game, which is generally intractable. Using an MFG approximation of the many-player game helps to partially resolve the intractability.

MFGs derive from the idea that stochastic games with large but finite amounts of agents may be approximated with an infinite number of agents. The MFG solution then provides strategies that result in near Nash equilibria for the finite player game. In this work, we do not concern ourselves with the translation of the infinite player solution back to a solution of the finite player problem. Rather, we consider only the infinite player problem directly.

Specifically, we consider a PA game with an infinite number of agents. Agents belong to one of a finite set of sub-populations or agent type. Within sub-populations agents are assumed to have identical preferences, while across sub-populations, they are assumed to (possibly) vary through the parameters to their cost functions. 

The advantage of considering the infinite player setting is that the MFG is significantly simpler to solve than the finite player game. This is due to the assumption that an individual agent is insignificant as the number of agents tends to infinity. Thus, we may model the agents' interactions through a mean-field distribution representing the population distribution of output processes. In contrast, finite player games require the agents to interact directly, which is significantly more complicated. 

Therefore, in this PA-MFG setting, our goals are as follows. For a given reward structure of the principal, we seek the collection of controls that results in a Nash equilibrium across agents. Similarly, we seek the reward structure which benefits the principal the most, taking into account the equilibrium that ensues for any particular admissible reward structure. We begin by formulating the agent's problem.

\subsection{The agents' problem in a generic PA-MFG} \label{sec:general_agent_problem}

We consider the agents' problem with $N \rightarrow \infty$ agents in total, each belonging to one of $K$ different sub-populations. The relevant time horizon is defined to be $\mfT:= [0,T]$. We work on the filtered probability space $(\Omega, \Ff, (\Ff)_{t \in \mfT}, \PP)$. All processes are assumed to be $\Ff$-adapted unless otherwise stated. We define the filtration specifically later in this section.

We denote the set of agent types by $\mcK := \{1, \cdots, K\}$, the subset of agents belonging to sub-population $k \in \mcK$ by $\mfN_k$, and the set of all agents by $\mfN$. From the perspective of the agents, the principal exogenously specifies the payment. 

Agent $i$, belonging to sub-population $k$, seeks to modulate their controls, denoted by $(\alpha_t^i)_{t \in \mfT}$. $\alpha_t^i$ is assumed to take values in a set $A \subset \RR^j$. Through modulating these controls, the agent modifies the output process which satisfies the stochastic differential equation (SDE)
\begin{equation}
    dX_t^{i} = b^k(t, X_t^{i}, \bm{\mu_t}, \alpha_t^i) dt + \sigma^k(t) dW_t^i; \qquad X_0^i = \xi^i \sim F(\Theta^k)
\end{equation}
Here, $W = \{W^i = (W_t^i)_{t \in \mfT}, i \in \mfN\}$ is a set of $N$ $m$-dimensional independent Wiener processes, and $W^i$ is progressively measurable with respect to the filtration $\Ff^W := (\Ff_t^W)_{t \in \mfT} \subset \Ff$. The function $b: \mfT \times \RR^d \times \mathcal{P}(\RR^d) \times A: \rightarrow \RR^d$ is a deterministic and measurable function which specifies the drift of the output process, and the function $\sigma: \mfT \rightarrow \RR^{d \times m}$ is a deterministic and measurable function which specifies the diffusion coefficient of the output process. $\Theta^k$ is a vector valued set of parameters which may vary across sub-populations.

We make one brief notational remark. Where appropriate, we use a superscript $\cdot^{(k)}$ rather than a superscript $\cdot^i$ to denote quantities relating to a representative (unspecified index) agent belonging to sub-population $k$. For example $X_t^{(k)}$ represents the output process at time $t$ for an representative (generic) agent from sub-population $k$. We also define $\bm{X_t} := (X_t^{(k)})_{k \in \mcK}$ as the collection of $K$ output processes representing the joint output processes from representative agents from each of the sub-populations.

Next, we make the following definition of the mean-field distribution of states $\bm{\mu}$.
\begin{definition} [Mean-Field Distribution of States] \label{def:mean_field_distribution} 
In the infinite-player setting, we denote the mean-field distribution of the output process for agents in sub-population $ k \in \mcK$ by $\mu^{(k)}$. Specifically, we introduce the flow of measures
\begin{equation}
    \mu^{(k)} = (\mu_t^{(k)})_{t \in\mfT},
    \qquad \mu_t^{(k)} \in \mcP(\RR^d), \;\forall t\in\mfT
\end{equation}
for all $k \in \mcK$,  where $\mathcal{P}(\RR^d)$ represents the space of probability measures on $\RR^d$, such that $\mu_t^{(k)}(A)$ is the probability that a representative agent from sub-population $k$ has an output process belonging to the set $A \in \mathcal{B}(\RR^d)$, at time $t$. 
Furthermore, we define
\begin{equation}
    \bm{\mu} = (\{\mu_t^{(k)}\}_{k \in \mcK})_{t \in \mfT}
\end{equation}
to be the flow of the collection of all mean-field measures.
\end{definition}
We also make an assumption about the initial condition of the output process.
\begin{assumption} \label{IntialStateAss}
The initial states $\{\xi^i\}_{i \in \mfN_k}$ 
are identically distributed, mutually independent, and independent of $\mathcal{F}^W$. Moreover, there exists a $c>0$ such that $\mathbb{E}[\Vert \xi^i\Vert^2] \leq c < \infty $, for all $i \in\mfN_k$ and $k \in \mcK$.
\end{assumption}

Next, we denote by 
\begin{equation}
    \Gg_t^i := \sigma\left((X_u^i)_{u \in [0,t]}\right),
\end{equation}
the filtration that the $i$-th agent must adapt their strategy to, which is the sigma field generated by the $i$-th firm's output process. Note that we assume that all firms have knowledge of the initial distribution (but not the actual value of) other firms' output processes. The full filtration $\Ff = (\Ff_t)_{t\geq 0}$ is defined as $\Ff_t = \bigvee_{i \in \mfN} \Gg_t^i$. The set of square integrable controls is defined as
\begin{equation}
    \mathbb{H}_t^2 := \left\{\alpha : \Omega \times \mfT \rightarrow A \subset \RR^j\; \bigg\vert \; \EE\left[{\int_0^T} (\alpha_t)^2\, dt \right] < \infty\right\}.
\end{equation}
The admissible set of controls for the agent are defined in the assumption below.
\begin{assumption} \label{ass: MinorContrAction} The set of admissible controls for firm $i\in\mfN$  is
\begin{align}
    \mcA^i := \left\{ \alpha \in \mathbb{H}_t^2\,\, \text{s.t. }  \alpha \text { is $\Gg^i$-adapted}\,\, \right\}. \label{firmsAdmissibleCntrl}
\end{align}
\end{assumption}
Each agent chooses their controls with the goal of minimizing the cost they incur through their acceptance of the principal's contract. Specifically, for a fixed $\bm{\mu}$, the $i$-th firm (belonging to sub-population $k$) aims to minimize the cost functional $J^{A,i}: \mcA^i \rightarrow \RR$, where
\begin{equation}
    J^{A,i}(\alpha;\bm{\mu}) = \EE\left[\int_0^T f^k(t, X_t^i, \bm{\mu_t}, \alpha_t^i)dt + g(X_T^i)\; \bigg\rvert \;\Gg_0^i\right].
\end{equation}
The functions $f^k$ and $g$ are commonly referred to as the running costs and terminal costs, respectively. The terminal cost can be viewed as the control of the principal, but in the context of the agents' problem, it is exogenous and fixed. This terminal cost may also be a function of the terminal mean-field distribution, but we focus on the case when it does not for simplicity.

Agents within a sub-population have the same cost parameters, and consequently,  those agents act, in equilibrium, in a similar manner. Each individual agent's strategy, however, is adapted to their own inventory and as such, agents' strategies are not identical, even within the same sub-population.

As previously stated, all agents seek to minimize their own costs, and we seek  the optimal control for all agents simultaneously. Specifically, we seek a Nash equilibrium. This is a collection of controls $\{(\alpha^{i, \star}) \in \mcA^i: i \in \mfN\}$ and mean-field flow $\bm{\mu^\star}$ (induced by the collection of controls) such that for any other admissible control $\alpha^i$ we have:
\begin{align}
    J^{A, i}(\alpha^{i, \star}, \bm{\mu^\star}) &\leq J^{A, i}(\alpha^{i}, \bm{\mu^\star}) \qquad \forall i \in \mfN_k, \, \forall k\in\mcK, \label{eq:MFG_optimal} \\
    \PP_{X_t^{(k)}} &= \mu_t^{(k)} \qquad \forall k \in \mcK. \label{eq:MFG_consistency}
\end{align}
Equation \eqref{eq:MFG_optimal} describes the optimality of the collection of controls, and indicates that no agent can improve their position by unilaterally deviating from it. Equation \eqref{eq:MFG_consistency} ensures that the collection of controls implies a controlled output process for the representative agent (recall agents are assumed to have identical preferences within sub-populations) from each sub-population that is consistent with the mean-field distribution used in the value function.

For an exogenous $g$,  let $V^{A,k}(g) =\inf_{\alpha \in \mcA^\cdot} J^{A, \cdot}(\alpha; \bm{\mu^\star})$ represent the optimally controlled value function for a representative agent in sub-population $k$, $\forall \;k\in\mcK$.

To solve \eqref{eq:MFG_optimal} and \eqref{eq:MFG_consistency}, we  ultimately express the problem in terms of an MV-FBSDE. There are many possible approaches to obtaining such an MV-FBSDE. Two common ones are applying the so-called probabilistic method of MFGs (see \cite{carmona2018probabilistic} for a detailed pedagogical treatment) or tools from variational analysis (see e.g., \cite{shrivats2020mean, casgrain2018mean,firoozi2020convex}).

In this work, we assume that the agents' MFG problem has already been cast into its MV-FBSDE form.
\begin{assumption}\label{ass:MVFBSDE}
We assume the Nash equilibria characterized by \eqref{eq:MFG_optimal} and \eqref{eq:MFG_consistency} may be expressed through the solution of an MV-FBSDE. That is, for each $i \in \mfN_k$,
\begin{align}
    dX_t^i &= \varphi^k(t, X_t^i, \PP_{\bm{X_t}} , Y_t^i, \PP_{\bm{Y_t}})dt + \sigma^k(t) dW_t^i; \qquad X_0^i = \xi^i \sim F(\Theta^k) \label{eq:MV-FBSDE_fwd}\\ 
    dY_t^i &= \rho^k(t, X_t^i, \PP_{\bm{X_t}}, Y_t^i, \PP_{\bm{Y_t}})dt + Z_t^i dW_t^i; \qquad Y_T^i = \partial_x g(X_T^i). \label{eq:MV-FBSDE_bwd}
\end{align}
Specifically, we assume solving \eqref{eq:MV-FBSDE_fwd}-\eqref{eq:MV-FBSDE_bwd} results in a triple $(X_t^{i,\star}, Y_t^{i,\star}, Z_t^{i,\star})$ such that the optimal $\alpha^{i, \star}$ can be recovered from the solution and $\alpha^{i, \star}$ solves \eqref{eq:MFG_optimal} with $\bm{\mu^\star} := \PP_{\bm{X_t^\star}}$ (which naturally solves \eqref{eq:MFG_consistency}). Here, $\bm{X_t^\star} := (X_t^{\star, (k)})_{k \in \mcK}$, is the collection of $K$ output processes from representative agents across the sub-populations which solve the MV-FBSDE above.
\end{assumption}

\begin{remark}
When there is no market clearing, $\varphi^k(t, X_t^i, \PP_{\bm{X_t}}, Y_t^i, \PP_{\bm{Y_t}})=b^k((t, X_t^i, \PP_{\bm{X_t}},\hat{\alpha}^k(t, X_t^i, \PP_{\bm{X_t}}, Y_t^i))$ and $\rho^k(t, X_t^i, \PP_{\bm{X_t}}, Y_t^i, \PP_{\bm{Y_t}})=-\partial_x\mathcal{H}^k(t, X_t^i, \PP_{\bm{X_t}}, Y_t^i,\hat{\alpha}^k(t, X_t^i, \PP_{\bm{X_t}}, Y_t^i))$ where $\mathcal{H}$ represents the Hamiltonian of the problem and there is no need for dependence on the law $\PP_{\bm{Y_t}}$. Specifically, $\mathcal{H}^k(t, x, \mu, y, \alpha) = \langle b^k(t, x, \mu, \alpha), y \rangle + f^k(t, x, \mu, \alpha)$ and the functions $\hat{\alpha}^k$ are the optimizers of the $\mathcal{H}^k$. 
\end{remark}

This is a rich class of problems for which existence and uniqueness results are well-developed. See \cite{carmona2013probabilistic} for a set of assumptions that results in such an MV-FBSDE, for a solution exists and is unique. Furthermore, this class includes MFGs that involve endogenous price formation, such as the MFG problem discussed in \cite{shrivats2020mean}, \cite{fujii2020mean}, and \cite{gomes2018mean}. In \Cref{sec:example}, we discuss an MFG model with endogenous price formation pertaining to REC markets. It is these cases of endogenous price formation that can give rise to a representation where there is dependence of the drift coefficients on the law of the adjoint process.

\subsection{Principal's problem in a generic PA-MFG} \label{sec:general_principal_problem}

The principal aims to determine the terminal payment structure $g(X_T)$  that optimizes their own criterion. The principal derives a benefit from the output process and compensates the agent for said output at the terminal time $T$.

We restrict ourselves to payment structures such that the set $\mathcal{M}(g):= \{(\alpha, \bm{\mu}): (\alpha, \bm{\mu}) \text{ solves } \eqref{eq:MFG_optimal}-\eqref{eq:MFG_consistency}\}$ is non-empty. That is, the principal chooses over a set of functions such that the agents' MFG is well-posed. Furthermore, the principal ensures the agents' reservation cost is met (in expectation), across all sub-populations. 

We denote the set $\Xi$ as the set of admissible terminal payment structures for the principal to choose among. Let $\mathbf{V}^{\boldsymbol{\mu}}(g):=(\int V^{A, k}(g) d\mu_0^{(k)})_{k\in\mcK}$, $\Pi\geq 0$ be a full rank $s\times|\mcK|$ matrix (where the inequality is understood componentwise) and $\mathbf{R}_0$ be a $s$-dimensional vector of reservation costs. In particular, we say
\begin{equation}
    \Xi := \left\{g: \RR^d \rightarrow \RR \,\bigg\lvert \, \EE[g(X^{(k)}_T)^2] < \infty, \, g \text{ convex}, \, \mathcal{M}(g) \neq \emptyset,\, \Pi\mathbf{V}^{\boldsymbol{\mu}^\star}(g)\leq \mathbf{R}_0 \, \forall k \in \mcK\right\}. \label{eq:admissible_payments}
\end{equation}
If $\Pi=\mathrm{Id}$, then the constraint corresponds to a classic upper bound on the reservation costs for all agents. If $\Pi=\boldsymbol{\pi}\in\Delta_{|\mcK|}$ (where $\Delta_\cdot$ is the unit simplex), then the constraint corresponds to a restriction on the (weighted) average cost across sub-populations.

If $\mathcal{M}(g)$ is not a singleton, we assume the agent chooses the one that is best for the principal. As a result, the principal's problem may be written as the solution to\seb{Is there a reason not to remove the $-$ signs and change inf to sup? Also, is there a reason to have $g$ linear and not in the utility function: $U\left(X_T^{(k)} + \lambda g(X_T^{(k)})\right)$?}
\notesteven{I believe we just had it set up this way initially. We can change it to sup. As for the utility function, it might affect slightly the lemma later on, but we can also treat different cases.}
\seb{OK. noted -- indeed the Lemma would need modification for non-linear utility on the total value to the principal.}
\begin{equation}
    V^{P} := \inf_{g \in \Xi} \EE\left[\sum_{k \in \mcK}\pi_k\left(- U\left(X_T^{(k)}\right) - \lambda g(X_T^{(k)})\right)\right], \label{eq:principal_value}
\end{equation}
where $U$ is a concave and non-decreasing utility function, and $\lambda > 0$ is an exogenous principal parameter indicating their willingness to trade off between the output process and the contract they pay. Here, $\pi_k$ represents the fraction of the total population attributable to sub-population $k$ in the mean-field limit. The objective is to find $V^P$, the corresponding function $g$, and use them to characterize the resulting Nash equilibrium among the agents. 

In general, the principal's problem is challenging to solve analytically, even with the simplifications of uncontrolled volatility and restricting to Markov contract structures. The general approach to solving variations of these problems analytically is outlined by \cite{elie2019tale}, which was inspired by the work of \cite{cvitanic2018dynamic} and \cite{sannikov2008continuous}. Broadly speaking, the idea is to equate the solution of the MFG with the solution of a BSDE, and use this BSDE to define a set of coupled McKean-Vlasov SDEs, which describe (among other things) the dynamics of the agents' optimal value process. From here, the principal's problem can be turned into a MFC problem, with the agents' MFG solution embedded into the problem definition. This can either be solved using PDEs (as in \cite{elie2019tale}) or using the probabilistic approach (as in \cite{carmona2015forward}). In both cases, however, analytic solutions are impossible in all but the simplest of situations. Classical numerical approaches exist, but as always, one faces the Curse of Dimensionality. Instead, we take a DL approach and present an optimization algorithm that can more flexibly solve these type of problems.

\section{Numerical Algorithm} \label{sec:algorithm}

\subsection{Approximating the Principal's Objective}

We begin with some simplifying assumptions on $g$. In particular, we restrict to functions that are continuous, monotone, convex, and take a real valued argument. This is a classic setting for many MFGs. Convexity is often required to derive the form of the MV-FBSDE in Assumption \ref{ass:MVFBSDE} and monotonicity is natural as it relates to reward/cost structures that are monotone in the terminal state. It is known that such functions $g$ can be approximated uniformly on compact sets by piecewise linear functions (see for instance \cite{marsden1972uniform}).
Moreover, a convex piecewise linear function can be constructed through a positive linear combination of call and put payoffs. We use this insight to formulate the following.
\begin{assumption}\label{ass:g.function}
The function $g:\mathbb{R}\to\mathbb{R}$ is continuous and can be written as
\begin{equation}
    g(x)=\phi_0+\sum_{j=1}^{\bar{N}} w_j(x-R_j)^+,
        \quad 
        \text{or} 
        \quad
    g(x)=\phi_0+\sum_{j=1}^{\bar{N}} w_j(R_j-x)^+, \label{eqn:g.put.payoff}
\end{equation}
for weights $\phi_0\in\mathbb{R},w_j\in\mathbb{R}_+$ and knot points $R_j\in\mathbb{R}$, $j=1,...,{\bar{N}}$.
\end{assumption}

\begin{remark}
For uniqueness and existence results, we often also require $g$ to be differentiable with a Lipschitz continuous derivative.
However, since this holds in the formulation above, except at a finite number of points, and we use a simulation approach the lack of differentiability at these points does not affect the implementation. Nevertheless, while we do not treat it explicitly here, it is straightforward to regularize the piecewise linear approximations to produce smooth versions of them.
\end{remark}

Without loss of generality we treat the first case in \eqref{eqn:g.put.payoff}. For a fixed set of knot points the principal's objective function now becomes
\begin{equation*}
 \EE\left[\sum_{k \in \mcK}\pi_k\left(- U(X_T^{\star,(k)}) - \lambda\phi_0- \lambda\sum_{j=1}^{\bar{N}} w_j(R_j-X_T^{\star,(k)})^+\right)\right],
\end{equation*}
where the optimally controlled state processes are associated with the mean-field equilibrium induced by the principal's input weights $\mathbf{w}$.  The following simple lemma will prove useful as it allows us to eliminate one of the problem constraints.

\begin{lemma}\label{lem:equiv.rep}
Under Assumption \ref{ass:g.function} the principal's problem can be equivalently expressed as
\begin{equation}
    \inf_{\mathbf{w} \in \hat{\Xi}}\left( \lambda\max_{i\in[s]}\frac{\Pi^\top_i\mathbf{V}^{\boldsymbol{\mu}^\star}(\hat{g})-R_{0,i}}{\Pi_i^\top\mathbf{1}} + \EE\left[\sum_{k \in \mcK}\pi_k\left(- U(X_T^{\star,(k)})-\lambda\hat{g}(X_T^{\star,(k)})\right)\right]\right)
\end{equation}
where $X_T^{\star,(k)}$ is the optimally controlled process of a representative agent from population $k\in\mcK$ in equilibrium,
\begin{equation}
    \hat{g}(x):=\sum_{j=1}^{\bar{N}} w_j(R_j-x)^+,
\end{equation}
and
\begin{equation}
    \hat{\Xi}:=\left\{\mathbf{w}\in\RR^{\bar{N}}\bigg\lvert \EE[\hat{g}(X_T^{\star,(k)})^2] < \infty, \, \mathcal{M}(\hat{g}) \neq \emptyset, \ k\in\mcK\right\}.
\end{equation}
Here $\Pi_i$ is the vector corresponding to the $i$th row of the full rank matrix $\Pi\in\mathbb{R}_+^{s\times|\mathcal{K}|}$, $[s]:=\{1,...,s\}$, $R_{0,i}$ is the $i$th component of $\mathbf{R}_0$, and the agents' optimal controls for $\hat{g}$ and $g$ coincide.
\end{lemma}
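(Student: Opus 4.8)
The plan is to exploit the fact that the constant term $\phi_0$ in the decomposition $g = \phi_0 + \hat g$ is a pure transfer that leaves the agents' equilibrium untouched while entering both the reservation constraint and the principal's objective affinely. The whole lemma then reduces to carrying out a one-dimensional linear optimization over $\phi_0$ in closed form.

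First I would establish the final clause of the statement, since everything else hinges on it. Because $\partial_x g = \partial_x(\phi_0 + \hat g) = \partial_x \hat g$, the terminal condition $Y_T^i = \partial_x g(X_T^i)$ of the backward equation \eqref{eq:MV-FBSDE_bwd} is identical for $g$ and $\hat g$; consequently the MV-FBSDE system of Assumption \ref{ass:MVFBSDE} coincides for the two payoffs, and so do its solution triple $(X^\star,Y^\star,Z^\star)$, the recovered optimal controls, and the induced equilibrium flow $\bm\mu^\star = \PP_{\bm{X}^\star}$. In particular $\mathcal{M}(g)=\mathcal{M}(\hat g)$, so the well-posedness requirement $\mathcal{M}(\hat g)\neq\emptyset$ in $\hat\Xi$ is equivalent to the one in $\Xi$, and $\EE[g(X_T^{(k)})^2]<\infty \iff \EE[\hat g(X_T^{(k)})^2]<\infty$ for any finite $\phi_0$. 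This justifies reparametrizing the problem by $(\mathbf{w},\phi_0)$ and splitting $\inf_{g\in\Xi} = \inf_{\mathbf{w}\in\hat\Xi}\inf_{\phi_0}$.

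Next I would track how $\phi_0$ propagates through the constraint and the objective. Since the equilibrium is unchanged, each agent's value function merely shifts, $V^{A,k}(g)=V^{A,k}(\hat g)+\phi_0$, and because every $\mu_0^{(k)}$ is a probability measure this gives $\mathbf{V}^{\bm\mu^\star}(g)=\mathbf{V}^{\bm\mu^\star}(\hat g)+\phi_0\mathbf{1}$. Writing the reservation constraint $\Pi\mathbf{V}^{\bm\mu^\star}(g)\le\mathbf{R}_0$ row by row yields $\phi_0\,\Pi_i^\top\mathbf{1}\le R_{0,i}-\Pi_i^\top\mathbf{V}^{\bm\mu^\star}(\hat g)$ for each $i\in[s]$. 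Full rank together with componentwise nonnegativity of $\Pi$ preclude a zero row, so $\Pi_i^\top\mathbf{1}>0$ and the constraint is equivalent to the single scalar bound $\phi_0\le \min_{i\in[s]}\big(R_{0,i}-\Pi_i^\top\mathbf{V}^{\bm\mu^\star}(\hat g)\big)/(\Pi_i^\top\mathbf{1})$, which is finite by the integrability built into $\hat\Xi$. For the objective \eqref{eq:principal_value}, substituting $g=\phi_0+\hat g$ and using $\sum_{k\in\mcK}\pi_k=1$ shows it equals $\EE[\sum_k\pi_k(-U(X_T^{\star,(k)})-\lambda\hat g(X_T^{\star,(k)}))]-\lambda\phi_0$, i.e.\ affine and strictly decreasing in $\phi_0$. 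Hence the inner infimum is attained at the upper endpoint $\phi_0^\star=\min_i\big(R_{0,i}-\Pi_i^\top\mathbf{V}^{\bm\mu^\star}(\hat g)\big)/(\Pi_i^\top\mathbf{1})$, and rewriting $-\lambda\phi_0^\star$ via $-\min_i a_i=\max_i(-a_i)$ produces the term $\lambda\max_{i\in[s]}\big(\Pi_i^\top\mathbf{V}^{\bm\mu^\star}(\hat g)-R_{0,i}\big)/(\Pi_i^\top\mathbf{1})$, yielding exactly the claimed objective once the outer $\inf_{\mathbf{w}\in\hat\Xi}$ is reinstated.

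The only genuinely delicate points are the invariance of the equilibrium under the constant shift, which I resolve cleanly through the terminal condition of \eqref{eq:MV-FBSDE_bwd}, and the legitimacy of decoupling the two infima. The latter is valid precisely because, once $\mathbf{w}$ (equivalently $\hat g$) is fixed, both the feasible interval for $\phi_0$ and all $\hat g$-dependent quantities are independent of $\phi_0$, so the nested infimum is a bona fide one-dimensional linear program with an explicit optimizer; I expect this bookkeeping, rather than any hard analysis, to be the crux of the argument.
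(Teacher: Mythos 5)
Your proposal is correct and follows essentially the same route as the paper's proof: observe that $\phi_0$ leaves the agents' equilibrium unchanged, shift it out of the reservation constraint to obtain the scalar upper bound $\phi_0\le\min_{i\in[s]}\bigl(R_{0,i}-\Pi_i^\top\mathbf{V}^{\boldsymbol{\mu}^\star}(\hat g)\bigr)/(\Pi_i^\top\mathbf{1})$, note the objective is decreasing in $\phi_0$ so the bound is attained, and substitute. Your explicit justification of the control invariance via the unchanged terminal condition $\partial_x g=\partial_x\hat g$ of the backward equation is a slightly more detailed version of the paper's one-line appeal to the same fact, not a different argument.
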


\begin{proof}
Consider in the definition of $\Xi$ the constraint
\[\mathbf{V}^{\boldsymbol{\mu}^\star}(g) \leq \mathbf{R}_0.\]
Note that the agents' optimal control is invariant to the choice of $\phi_0$. Using this observation we may pull $\phi_0$ out from the agents' objective. Re-arranging yields that
\[\phi_0\Pi\mathbf{1}\leq \mathbf{R}_0-\Pi\mathbf{V}^{\boldsymbol{\mu}^\star}(\hat{g}).\]
From this and the positivity of $\Pi$ we can conclude
\[\phi_0\Pi\mathbf{1}\leq \mathbf{R}_0-\Pi\mathbf{V}^{\boldsymbol{\mu}^\star}(\hat{g}) \ \ \  \iff \ \ \  \phi_0 \leq \min_{i\in[s]}\frac{R_{0,i}-\Pi^\top_k\mathbf{V}^{\boldsymbol{\mu}^\star}(\hat{g})}{\Pi_k^\top\mathbf{1}}.\]
We can divide by $\Pi_k^\top\mathbf{1}$ since $\Pi$ is full rank and thus has no rows that are identically $0$.
Furthermore, if $\phi_0$ is optimal then equality holds in the above. If not, the principal can always improve their objective by increasing $\phi_0$ to this upper bound since it has no impact on the agents' controls - a contradiction. On the other hand, if we restrict to the above equality, minimizing the principal's objective over $\mathbf{w}\in \hat{\Xi}$ determines the minimizing value of $\phi_0$. The claim in the lemma then follows by substitution.
\end{proof}

We also enforce the following standard assumptions that simplifies the optimization.
\begin{assumption} \label{ass:set.hat.Xi} The following conditions hold:
\begin{enumerate}
    \item For any $\alpha\in\mathcal{A}^k$, $\phi_0\in\mathbb{R}$, and $\mathbf{w}=(w_1,\dots,w_{\bar{N}})^\top \in\mathbb{R}^{\bar{N}}_+$, the drift coefficient $b^k(t,x,p,\alpha)$ and diffusion coefficient $\sigma^k(t)$ are such that $\EE\left[g(X^{(k)}_T)^2\right]<\infty$, for all $k\in\mcK$.
    \item There exists a $E\subseteq\mathbb{R}^{\bar{N}}$ such that $\mathbf{w}\in E$ if and only if $\mathcal{M}(g)\not=\emptyset$.
    \item There exists an almost-everywhere differentiable mapping $\psi$ such that $\psi[\mathbb{R}^{\bar{N}}]=E$.
\end{enumerate}
\end{assumption}
This assumption allows us to replace the constraint $\mathbf{w}\in \hat{\Xi}$ with $\mathbf{w}\in E$ which is straightforward to implement numerically. In particular, if Assumption \ref{ass:set.hat.Xi}(3) holds we can optimize directly on $\mathbb{R}^{\bar{N}}$ by defining $\mathbf{w}:=\psi(\mathbf{u})$ for $\mathbf{u}\in\mathbb{R}^{\bar{N}}$.

\subsection{Approximating the MV-FBSDE}

To solve the PA problem we first need a mechanism for solving the agent problem through the MV-FBSDE \eqref{eq:MV-FBSDE_fwd}-\eqref{eq:MV-FBSDE_bwd} for different weights $\mathbf{w}$ specified by the principal. We tackle this by discretizing said MV-FBSDE, and parameterizing the co-adjoint process and initial condition by neural nets akin to the deep BSDE/FBSDE techniques employed in \cite{han2020convergencebsdes,han2018solving,germain2019numerical,carmona2021convergence}. We note that a new addition in our setting is the dependence on the law of the adjoint process. We briefly introduce a new notation, whereby $[\cdot]^{l, (k)}$ represents the $l$-th sampled path of $[\cdot]$ for a representative agent from sub-population $k$ in our discretization. We use $l$ to avoid any possible confusion with agent indexing as used in prior sections, and do not mix agent indexing and representative indexing to maintain clarity. 

Letting $\mathbb{T}=\{t_0,...,t_M\}$ be a discrete set of points with $t_0=0$ and $t_M=T$ we define an ensemble of neural nets $Y^{\theta_{0}^{(k)}}$, $(Z^{\theta_{m}^{(k)}})_{m=1}^M$ and introduce the time discretized version of \eqref{eq:MV-FBSDE_fwd}-\eqref{eq:MV-FBSDE_bwd}:
\begin{align}
X_{t_m}^{l,(k)} 
&=
X_{t_{m-1}}^{l,(k)}+\varphi^k(t,X_{t_{m-1}}^{l,(k)},\hat{\mathbb{P}}_{\mathbf{X_{t_{m-1}}}},Y_{t_{m-1}}^{l,(k)},\hat{\mathbb{P}}_{\mathbf{Y_{t_{m-1}}}})\Delta t +\sigma^k(t_{m-1})\Delta W^i_{t_m}
\label{eqn:discretized.FBSDE.X}
\\
X_0^{l,(k)} &=\xi^l
\label{eqn:discretized.FBSDE.X0}
\\
Y_{t_m}^{l,(k)}
&=
Y_{t_{m-1}}^{l,(k)}+\rho^k(t,X_{t_{m-1}}^{l,(k)},\hat{\mathbb{P}}_{\mathbf{X_{t_{m-1}}}},Y_{t_{m-1}}^{l,(k)},\hat{\mathbb{P}}_{\mathbf{Y_{t_{m-1}}}})\Delta t
+ Z^{\theta_{m}^{(k)}}(X_{t_{m-1}}^{l,(k)},Y_{t_{m-1}}^{l,(k)})\Delta W^l_{t_m}
\label{eqn:discretized.FBSDE.Y}
\\
Y_{0}^{l,(k)}
&=
Y^{\theta_{0}^{(k)}}(X_0^{l,(k)})
\label{eqn:discretized.FBSDE.Y0}
\end{align}
for $\Delta t=T/M$, $\Delta W_{t_m}^i=W^l_{t_m}-W^l_{t_{m-1}}$ and $m=1,...,M$. Here $\boldsymbol{\theta}:=(\theta_{m}^{(k)})_{m\in [N],k\in\mathcal{K}}$ are the network parameters, and $\hat{\mathbb{P}}_\cdot$ denotes the empirical distribution over the samples.
To solve the original MV-FBSDE we have also sampled finitely many paths ($\mathcal{N}_k$ paths) for each sub-population $k\in\mathcal{K}$. For compactness of notation we write $\mathbf{X}^{(k)}_t:=(X^{1,(k)}_t,...,X_t^{\mathcal{N}_k,(k)})^\top,\mathbf{Y}^{(k)}_t:=(Y^{1,(k)}_t,...,Y_t^{\mathcal{N}_k,(k)})^\top$ for $k\in\mathcal{K}$ and $t\in\mathbb{T}$
to represent the samples of each population. When the superscript $k$ is omitted, we refer to the samples across all sub-populations. 

The intuition of the algorithm is that we optimize over the parameters $\boldsymbol{\theta}$ in order to match the terminal condition for $Y_T^{l,(k)}$. To this end, we define the MV-FBSDE loss function on a set of sample  paths as
\begin{equation}\label{eqn:FBSDE.loss}
\mathcal{L}_F(\boldsymbol{\theta})=\frac{1}{|\mathcal{K}|}\sum_{k\in\mathcal{K}}\frac{1}{\mathcal{N}_k}||\mathbf{Y}^{(k)}_{t_{M}}-\partial_xg(\mathbf{X}^{(k)}_{t_{M}})||_2^2.
\end{equation}
We say that we have an approximate solution to the MV-FBSDE when the loss function has converged within a specified tolerance to $0$. This loss function implicitly depends on the weights $\mathbf{w}$ through the definition of $g$.

\subsection{Learning the Principal Objective Function}

To tackle the principal problem we discretize the running costs using a Trapezoidal rule to optimize the objective with respect to the weights $\mathbf{w}$.
Here, for a fixed set of weights, we simulate $\mathcal{N}_k$ sample paths for each sub-population $k\in\mcK$ to approximate the expectation. We also assume a uniform grid $\mathbb{T}$ so that we may write\seb{why use $\star$ below?}\notesteven{I think originally it was because we want to use the approximate optimally controlled process, but I think in this case this is confusing. In practice we just use the learned process so I will remove the stars and discuss with Arvind \& Yichao to see what a better way to convey this is. - UPDATE: we changed this to not include a star.}
\seb{I normally think of $\star$ as describing something optimal... maybe a different notation like $X^\diamond$ as you intend to use the approximate optimal solution here?}
\begin{align*}
\mathrm{Running \ Costs}_l^{(k)}&\approx\Delta t\sum_{m=1}^{M-1}f^k(t, X^{l,(k)}_{t_m}, \hat{\mathbb{P}}_{\mathbf{X}_{t_m}}, \hat{\alpha}_{t_m}^{l})+\frac{\Delta t}{2}\sum_{m\in\{0,M\}}f^k(t, X^{l,(k)}_{t_m}, \hat{\mathbb{P}}_{\mathbf{X}_{t_m}}, \hat{\alpha}_{t_m}^{l}).
\end{align*}
Our principal loss on a sample is given by\seb{not sure if the definition of $[s]$ has been given}\notesteven{It is above (2.12) and now also in Lemma 3.3. It corresponds to the number of rows of $\Pi$, but maybe there is a better way to reference the dimension here... }\yichao{probably $\max_{1\leq i\leq s}$?}
\begin{equation}\label{eqn:sample.principal.loss}
    \hat{\mathcal{L}}_P(\mathbf{w})= \lambda\max_{i\in[s]}\frac{\Pi^\top_i\hat{\mathbf{V}}^{\hat{\mathbb{P}}}(\hat{g})-R_{0,i}}{\Pi_i^\top\mathbf{1}} + \sum_{k \in \mcK}\frac{\pi_k}{\mathcal{N}_k}\sum_{l=1}^{\mathcal{N}_k}\left(- U(X_T^{l,(k)})-\lambda\hat{g}(X_T^{l,(k)})\right),
\end{equation}
where the $k^{th}$ component of $\hat{\mathbf{V}}^{\hat{\mathbb{P}}}(\hat{g})$ is given by:
\[
\hat{\mathbf{V}}_k^{\hat{\mathbb{P}}}(\hat{g}):=\frac{1}{\mathcal{N}_k}\sum_{l=1}^{\mathcal{N}_k}\left(\mathrm{Running \ Costs}_l^{(k)}+ \hat{g}(X^{l,(k)}_{t_M})\right).
\]
We use this to estimate the true principal loss at $\mathbf{w}$.

\begin{wrapfigure}{r}{0.6\textwidth}
\vspace*{-0.5em}
\begin{minipage}{0.6\textwidth}
\begin{algorithm}[H]
\scriptsize
\SetAlgoLined
 Initialize forward network parameters $\mathbf{\theta}$.  Initialize principal parameters $\Upsilon$.  Initialize memory buffer $\mathfrak{M}$. Initialize $\mathbf{u}^{(0)}$
 \\
 $N_O:=$ max \# of outer steps, $N_S\geq \bar{N}:=$ local sample size, $N_F:=$ \# MV-FBSDE  steps, $N_A:=$ \# loss approximation steps, $N_P:=$ \# principal loss  steps,  $(\mathcal{N}_k)_{k\in\mcK}:=$ MV-FBSDE sample size, $N_B:=$ \# loss approximation batch size\;
 Define weight error tolerance $TOL$ and MV-FBSDE error tolerance $TOL_F$\;
 Define update rule for sampling radius $\epsilon$, network parameters $\boldsymbol{\theta},\Upsilon$ and values $\mathbf{u}^{(\cdot)}$\;
 \ForEach{$n$ \ $\mathrm{in}$ \ $[N_O]$}{
    Initialize local samples $\mathcal{U}:=(\mathbf{u}_i^{(j)})_{i\in[N_S]}\subset B_{\epsilon}(\mathbf{u}^{(j)})$\;
    \ForEach{$\mathbf{u}'\in \mathcal{U}$}{
      \ForEach{$k \ \mathrm{in} \ [N_F]$}{
        Sample paths of the discretized MV-FBSDE \eqref{eqn:discretized.FBSDE.X}-\eqref{eqn:discretized.FBSDE.Y0} by generating initial conditions, Brownian noise, and passing through the ensemble network $\boldsymbol{\theta}$ using the sample sizes $(\mathcal{N}_k)_{k\in\mcK}$\;
        Compute the MV-FBSDE loss $\mathcal{L}_F(\boldsymbol{\theta})$ (see \eqref{eqn:FBSDE.loss})\;
        Update $\boldsymbol{\theta}$\;
        \If{$\mathcal{L}_F(\boldsymbol{\theta})<TOL_F$}{
        break\;
    }
    }
    Estimate the principal loss via $\hat{\mathcal{L}}_P(\psi(\mathbf{u}'))$ (see \eqref{eqn:sample.principal.loss})\; 
    Store $(\mathbf{u}',\hat{\mathcal{L}}_P(\psi(\mathbf{u}')))$ in $\mathfrak{M}$\;
    }
    \ForEach{$k \ \mathrm{in} \ [N_A]$}{Sample from $\mathfrak{M}$ a batch $\mathbf{U}:=(\mathbf{u}_m)_{m\in[N_B]}$ and associated principal losses $\hat{\mathcal{L}}_P(\psi(\mathbf{U})):=(\hat{\mathcal{L}}_P(\psi(\mathbf{u}_m)))_{m\in[N_B]}$\; 
    Compute the MSE loss $||\mathcal{L}^{\Upsilon}_P(\psi(\mathbf{U}))-\hat{\mathcal{L}}_P(\psi(\mathbf{U}))||_2^2$\;
    Update $\Upsilon$\;
    }\ForEach{$k$ \ $\mathrm{in}$ \ $[N_P]$}{
        Compute $\mathcal{L}^{\Upsilon}_P(\psi(\mathbf{u}^{(j)}))$\;
        Update $\mathbf{u}^{(j)}\mapsto\mathbf{u}^{(j+1)}$\;
        Increment $j\mapsto j+1$\;
    }
    Update $\epsilon$\;
    \If{$||\mathbf{u}^{(j)}-\mathbf{u}^{(j-N_P)}||_2<TOL$}{
        break\;
    }
    }
    
 \caption{Principal-Agent Optimization}
 \label{alg:PA.optimization}
\end{algorithm}
\end{minipage}
\vspace*{-2em}
\end{wrapfigure}
Given this estimate of the principal's loss we aim to optimize it over the weights, which requires gradients. To compute an approximate gradient, we  estimate the principal loss function over the weights locally. In particular, we sample $N$ points uniformly in an $\epsilon$-ball about the current estimate of the weights. The principal loss on each of these samples may be estimated by solving the 
MV-FBSDE. With these estimates of the loss within an $\epsilon$-ball, we use a second neural net,
$\mathcal{L}_P^\Upsilon(\mathbf{w})$ with parameters $\Upsilon$ that fits the target  $\hat{\mathcal{L}}_P(\mathbf{w})$. From this estimate, we may obtain estimates of the gradients and perform an approximate gradient update. For simplicity, the optimization  over weights proceeds on $\mathbb{R}^{\bar{N}}$ by using  $\psi$ from Assumption \ref{ass:set.hat.Xi}. More specifically, we take gradient of $\mathcal{L}_P^\Upsilon(\psi(\mathbf{u}))$ with respect to the input $\mathbf{u}$  and perform a  gradient step  to determine the next value of $\mathbf{u}$. This, in turn, defines updated values of $\mathbf{w}$.

\subsection{Algorithm}

We now have all the components necessary to describe the algorithm for solving the PA problem fully. Our approach may be viewed as having two main components: an inner optimization (for the agents) and an outer optimization (for the principal). The approach works at a high level by (i) initializing $\mathbf{u}^{(0)}\in\mathbb{R}^{\bar{N}}$, (ii) performing $N_F$ optimization steps for the inner agent problem, (iii) fixing the agents' controls, estimate the principal's loss function, and (iv) performing $N_P$ gradient updates of the outer problem. This is repeated until the Principal's weights and the MV-FBSDE loss are  within a given tolerance. The details are provided in Algorithm \ref{alg:PA.optimization}.

\section{Example} \label{sec:example}

As a motivating example of a real-world problem where our methodology could be applied, we consider Renewable Energy Certificate (REC) markets. The optimal behavior of agents in REC markets is covered in detail from a MFG perspective in \cite{shrivats2020mean}, however, we extend their model in two significant ways: first by extending the problem to include the regulator as a principal, and second by expanding the state and control space of the model for a more realistic formulation.

\subsection{The Basics of REC Markets}

REC markets belong to the class of so-called market-based emissions regulation policies, the most well known of which are carbon cap-and-trade (C\&T) markets.

REC markets are a closely related alternative to C\&T markets. In REC markets, a regulator sets a floor on the amount of energy generated from renewable sources for each firm (based on a percentage of their total energy generation), and provides certificates for each MWh of energy produced via these means. This is also known as a Renewable Portfolio Standard (RPS), and typically apply to private Load Serving Entities (LSEs), also known as electricity suppliers. To ensure compliance, each firm must surrender certificates totaling the floor at the end of a compliance period, with a monetary penalty paid for each lacking certificate. The certificates are traded assets, allowing regulated LSEs to make a choice about whether to produce electricity from renewable means themselves, or purchase the certificates on the market (or a mix of both). In either case, whether purchasing or producing clean, RECs induce the production of clean energy, as all purchased RECs must have been generated through clean generation means.

In practice, these systems regulate multiple consecutive and disjoint compliance periods, which are linked together through a mechanism called \textit{banking}, where unused allowances in period-$n$ can be carried over to period-$(n+1)$. However, we consider a simpler single-period framework in this example.


\subsection{Modeling of REC markets}
We discuss the single-period framework for REC markets with the following rules. The market governs a compliance period from $[0,T]$ denoted as $\mfT$. A firm obtains RECs in each period, with their terminal RECs denoted by $X_T$. At time $T$ a firm must pay (or possibly is paid) $C(X_T)$, which represents a generic penalty chosen by the regulator, which is made known to the firms. We assume there are no costs after time $T$.

Firms receive RECs by generating electricity through a particular energy source (depending on the market, this could be solar, nuclear, etc.). One REC typically corresponds to one MWh of electricity produced via the target energy source. A firm may also purchase or sell RECs on the market. After $T$, all firms forfeit any remaining RECs. $T$ can be thought of as `the end of the world' -- there are no costs associated with any time after this.

\subsubsection{The Agent Problem in REC Markets}
Agents belonging to sub-population $k\in\mcK$ are assumed to have a baseline REC generation rate $h_t^k$ at which they generate with zero marginal cost. As discussed in Section \ref{sec:general_agent_problem}, agents preferences are assumed to be identical within sub-populations, but are distinct across them. All agents are minor agents; that is, they have no market impact individually. 

Agents have the ability to further increase $h_t^k$ through their rate of expansion activities. In essence, this models building more REC generation capacity. In the context of a Solar REC market, this can be thought of as installing more solar panels. This control is denoted by $(\alpha_t^i)_{t \in \mfT}$, with the units being REC capacity added per unit time. 

The agents also have the ability to rent short-term REC generation capacity for an assumed quadratic cost, denoted by the control $(g_t^i)_{t \in \mfT}$. Once again, this represents a rate, this time of generation capacity rented per unit time.

Finally, agents may engage in the REC market and purchase/sell RECs as needed, with their purchasing rate in the market denoted by $(\Gamma_t^i)_{t \in \mfT}$. Generation expansion and rental must be positive, while trading rate may be positive or negative. 

We denote the collection of expansion rates, generation rental rates, and trading rates by $\bm{\alpha_t} := (\alpha_t^i)_{i \in \mfN}$, $\bm{g_t} := (g_t^i)_{i \in \mfN}$ and $\bm{\Gamma_t} := (\Gamma_t^i)_{i \in \mfN}$, respectively. The addition of expansion generation as a control differentiates this example from \cite{shrivats2020mean}. This model could be made even more realistic by incorporating a delay between the decision to expand and the increase to the baseline generation rate. 

The agents therefore must keep track of two state variables: their REC inventory, and their REC capacity. The relevant state variables for agent $i$ in sub-population $k$ are denoted $X^i$ and $C^i$ and they satisfy the SDEs
\begin{align}
    dX_t^i &= (h_t^k + C_t^i + \Gamma_t^i) dt + \sigma^k dW_t^i, \qquad &X_0^i = \xi^i \sim F(\Theta^k) \\
    dC_t^i &= \alpha_t^i dt + 0\, dB_t^i, \qquad &C_0^i = 0.
\end{align}
We include the diffusive term in $C_t^i$ even though its coefficient is 0 to make clear that in principle it may be stochastic as well -- e.g., a firm pays for and plans expansion capacity but the capacity is not delivered.

As in \Cref{sec:general_agent_problem}, let $\mu_t^{(k)}$ represent the mean-field distribution of states. That is for all $t \in \mfT$, $\mu_t^{(k)} \in \mathcal{P}(\RR^2)$.  Similarly, $\bm{\mu} = (\{\mu_t^{(k)}\}_{k \in \mcK})_{t \in \mfT}$. Moreover,  $\mu_t^{(k), X}$ and $\mu_t^{(k), C}$ denotes the marginal mean-field distributions of the denoted state. \Cref{IntialStateAss} and \Cref{ass: MinorContrAction} are assumed to hold.

The agents seek to profit maximize, and in particular, they aim to optimize the following cost functional:
\begin{align}
    J^{A, i}(\alpha, g, \Gamma; \bm{\mu}) = \EE\biggl[&\int_0^T\frac{\zeta^k}{2}(g_u^i)^2+\frac{\gamma^k}{2}(\Gamma^i_u)^2 + \frac{\beta^k}{2} (\alpha_u^i)^2+S^{\bm{\mu}}_u\Gamma^i_udu 
    + \phi_0+\sum_{j=1}^N w_j(R_j-X_T^{i})^+ \; \rvert \;\Gg_0^i \biggr], 
    \label{eq:agent_performance_REC}
\end{align}
with the form of the terminal condition given in  \Cref{ass:g.function}.
$S^{\bm{\mu}}$ represents the equilibrium REC price. This cost functional can be thought of as the sum of various costs. The controls all have quadratic costs associated with them, with the latter two terms representing the cost (or profits) from trading and the non-compliance penalty imposed by the principal, respectively. $\zeta^k, \gamma^k, \beta^k$ are scalar cost parameters which are identical for firms within a sub-population, and possibly different across them. 

The REC price is obtained endogenously through a market clearing condition, as in \cite{shrivats2020mean, fujii2020mean, gomes2018mean}. The clearing conditions imply that
\begin{equation}
    \lim_{N \rightarrow \infty} \tfrac{1}{N}\sum_{i\in\mfN}\Gamma_t^i = 0,
\end{equation}
 and this condition induces an expression for the REC Price $S^{\bm{\mu}}$ provided in \eqref{eqn:price.path.sol}.
The clearing condition amounts to the average trading rate (across agents) vanishing at all times in the limit of large number of players. Note that in the individual agents' cost functional \eqref{eq:agent_performance_REC}, the mean-field distribution arises only through the REC price. 

As in Section \Cref{sec:general_agent_problem}, the agents seek to find a Nash equilibrium as defined in \eqref{eq:MFG_optimal} and \eqref{eq:MFG_consistency}. One can show (following the steps of \cite{shrivats2020mean}, or the probabilistic approach espoused in \cite{carmona2013probabilistic}) that the solution for agent $i$ in sub-population $k$ for all $i \in \mfN_k$, $k \in \mcK$, can be found through the solution to the following MV-FBSDE
\begin{align}
    dX_t^i &= (h_t^k - \left(\tfrac{1}{\zeta^k}+\tfrac{1}{\gamma^k}\right)Y_t^{i, X} - \tfrac{1}{\gamma^k} S_t^{\bm{\mu}} + C_t^i) dt + \sigma^k dW_t^i, \; &&X_0^i = \xi^i,
    \label{eq:FBSDE_fwd1}
    \\
    dC_t^i &= -\tfrac{1}{\beta^k} Y_t^{i, C} dt + 0 dB_t^i, \; &&C_0^i = 0,
    \label{eq:FBSDE_fwd2}
    \\
    dY_t^{i, X} &= Z_t^{i, X} dW_t^i, \; &&Y_T^{i, X} = \partial_x g(X_T^i),
    \label{eq:FBSDE_bwd1}
    \\
    dY_t^{i, C} &= - Y_t^{i, X} dt + Z_t^{i, C} dB_t^i, \qquad &&Y_T^{i, C} = 0,
    \label{eq:FBSDE_bwd2}
\end{align}
with the optimal controls given by
\begin{align}
    \alpha_t^{i, \star} &= -\tfrac{Y_t^{i, C}}{\beta^k},
    \\
    g_t^{i, \star} &= -\tfrac{Y_t^{i, X}}{\zeta^k}, 
    \\
    \Gamma_t^{i, \star} &= -\tfrac{Y_t^{i, X} - S_t^{\bm{\mu}}}{\gamma^k}, \quad \text{and}
    \\
    S_t^{\bm{\mu}} &= \frac{-1}{\sum_{k \in \mcK}\frac{\pi_k}{\gamma_k}}\sum_{k\in\mathcal{K}}\frac{\pi_k}{\gamma^k}\mathbb{E}\left[Y_t^{(k),X}\right].
    \label{eqn:price.path.sol}
\end{align}
Therefore, the agents' problem fits into the formulation described in \Cref{sec:general_agent_problem}. We now discuss the principal's problem. 

\subsubsection{The Principal's Problem in REC Markets}

The principal aims to solve the optimization problem:
\begin{align*}
    \inf _{\phi_0\in\mathbb{R},\mathbf{w}\in\mathbb{R}_{+}^{\bar{N}}} J^{P}\left(\phi_0,\mathbf{w}\right)=\inf _{\phi_0\in\mathbb{R},\mathbf{w}\in\mathbb{R}_{+}^{\bar{N}}} \mathbb{E}\left[\sum_{k \in \mathcal{K}} \pi_{k}\left(-\lambda \left(\phi_{0}+\sum_{j=1}^{m} w_{j} (R_{j}-X_T^{(k),\star})^+\right)-X_{T}^{(k),\star}\right)\right],
\end{align*}
where $X_t^{(k),\star}$ represents the optimal controlled REC inventory for a representative agent in sub-population $k$, and $\mathbf{w}=(w_1,\dots,w_{\bar{N}})^\top$ subject to the constraint
\[
\boldsymbol{\pi}^\top\mathbf{V}^{\boldsymbol{\mu}^\star}(g)\leq R_0,
\]
where $\boldsymbol{\pi}\in\Delta_{|\mcK|}$ is the weight vector whose entries correspond to the fraction of the market in the mean-field that is attributable the sub-populations $k\in\mcK$. When expanded this may be written
\[
\sum_{k=1}^K\pi_k  \int J^{A, (k)}(\alpha^{(k), \star},g^{(k),\star},\Gamma^{(k),\star}; \bm{\mu^\star})d\mu_0^{(k),\star}\leq R_0,
\]
where $(\alpha^{(k), \star}, g^{(k),\star},\Gamma^{(k),\star})$ represent the collection of optimal controls across representative agents from each group $k\in \mathcal K$ and $J^{A, (k)}$ denotes their cost functional. $\bm{\mu}^\star$ represents the induced mean-field flow from the optimal controls. The constant $R_0$ represents the reservation cost.

It is possible (via Lemma \ref{lem:equiv.rep}) to see that the principal's problem is equivalent to the modified problem:
\begin{equation*}
\inf _{\mathbf{w}\in\mathbb{R}_{+}^N}\mathbb{E}\left[\sum_{k \in \mathcal{K}} \pi_{k}\left(-\lambda\left(R_{0}-\mathrm{Running \ Costs}^{k,\star}\right)-X_{T}^{k,\star}\right) \cdot\right]
\end{equation*}
where $\mathrm{Running \ Costs}^{k,\star} = \int_0^T\frac{\zeta^k}{2}(g_u^{(k),\star})^2+\frac{\gamma^k}{2}(\Gamma^{(k), \star}_u)^2 + \frac{\beta^k}{2} (\alpha_u^{(k), \star})^2+S^{\bm{\mu}^\star}_u\Gamma^{(k), \star}_udu$. In this representation we have \[\phi_0=R_0-\sum_{k\in\mathcal{K}} \pi_j \mathbb{E}\left[\mathrm{Running \ Costs}^{k,\star}+\sum_{j=1}^Nw_j(R_j-X_T^{k,\star})^+\right].\]
Note that $X_T^{k,\star}$ and the running costs depend implicitly on $\mathbf{w}$ through the optimal controls.

Both the agent and principal's problems fit into the overall framework discussed in \Cref{sec:general_agent_problem} and \Cref{sec:general_principal_problem}, more specifically,  Algorithm \ref{alg:PA.optimization} may be applied to them. Moreover, this problem is much harder to solve using the methods described in \cite{shrivats2020mean}, with the increased state space posing a challenge to the semi-analytic iterative scheme proposed therein. Thus, the algorithm unlocks a problem that is otherwise infeasibly difficult with standard numerical methods.

\section{Results} \label{sec:results}

We next proceed to implementing the algorithm on the PA REC problem and discuss the results.
We first consider a relatively simpler model of the REC market example discussed above, as a method to establish some evidence of the appropriateness of the solution output by Algorithm \ref{alg:PA.optimization}. Specifically, we first consider the scenario where the terminal cost $g(X_T)$ as in \eqref{eqn:g.put.payoff} has only a single knot point (i.e. $\bar{N}$ = 1). Upon validating our solution in this setting, we proceed to implement our algorithm on a similar synthetic example where $\bar{N} = 10$. This allows for a richer class of terminal penalty functions to be searched over from the principal's perspective, and results in a more interesting optimization problem. 
In this section, we use the parameters given in \Cref{tab:compliance.parameters} and \Cref{tab:model.parameters} throughout. The distribution of initial inventory $X_0^{(k)} \sim \xi^{(k)} = \mathcal{N}(v^k, \eta^k)$, for all agents in sub-population $k$, for all $k \in \mcK$. 

\begin{table}[ht]
\centering
\begin{tabular}[t]{cccccc}
\toprule\toprule
$\Delta_t$ & $T$ &$K$ & $\lambda$& $R_0$\\
\hline
$1/52$ & $1$ &$2$ & $6$ &$0$\\
\toprule\toprule\\
\end{tabular}
\caption{Compliance Parameters}
\label{tab:compliance.parameters}
\end{table}%
\begin{table}[ht]
\centering
\begin{tabular}[t]{cccccccccc}
\toprule\toprule
$\text{Sub-population}$ & $\pi_k$ & $h^k$ &$\sigma^k$ & $\zeta^k$ &$\gamma^k$&$v^k$&$\eta^k$&$\beta^k$\\
\hline
$k=1$ &$0.25$&$0.2$& $0.1$ & $1.75$ &$1.25$&$0.6$&$0.1$&1.0\\
\hline
$k=2$ &$0.75$&$0.5$& $0.15$ & $1.25$ &$1.75$&$0.2$&$0.1$&1.0\\
\toprule\toprule\\

\end{tabular}
\caption{Model Parameters}
\label{tab:model.parameters}
\end{table}%
These parameters are not calibrated to data, but chosen to provide stylized differences between the sub-populations that will be reflected in their optimal controls, as in \cite{shrivats2020mean} and \cite{shrivats2020optimal}. In particular, we note that a firm from population 1 is more likely to have a much greater initial allocation of RECs than one from population 2. Furthermore, they have greater rental costs, but a lower cost of trading RECs, as well as a significantly lower baseline REC generation capacity. These differences in parameters results in optimal controls that delineate sub-populations for their particular needs in this system.

\subsection{Single knot non-compliance penalty}

For the single knot model, we fix $g$ such that it has a single knot point at $R_1 = 0.9$. Assuming $R_0 = 0$, the terminal condition $\partial_xg$ of the MV-FBSDE becomes a step function 
\begin{align*}
    \begin{cases} 
    \partial_xg(x) = -w & x \leq 0.9,
    \\
    \partial_xg(x) = 0 & x > 0.9.
    \end{cases}
\end{align*}

\seb{In the left plot, it is not clear the weights really converged, could you run it with more iterations to show the convergence?}\notesteven{We can do that - currently the other optimization is running, but we will update this next.}We apply Algorithm \ref{alg:PA.optimization} in this case, and in Figure \Cref{fig:1.node.principal.loss.and.w.trajectory}  plot (left) the trajectory of $w$ across the optimization, and (right) the estimated principal loss across a set of candidate $w$'s. Specifically, we conducted a grid search for $w \in [0.05, 0.40]$ by applying our algorithm and saving the trained ensemble model for each $w$, which we denote by $\boldsymbol{\theta}_w$. For each of these models, we sampled $100$ new batches of Brownian noise and initial conditions, collecting the resulting means and standard errors of the principal losses. From these plots, we see that the optimal weight suggested by the algorithm quickly entered a band roughly between $0.18$ and $0.24$ and then eventually settles near $0.205$. Meanwhile, the curve of principal loss appears to be convex in $w$, with the minimum occurring right around this number, albeit with a relatively flat area surrounding it. 

\begin{figure}[ht]
\begin{center}
    \includegraphics[width=0.6\textwidth]{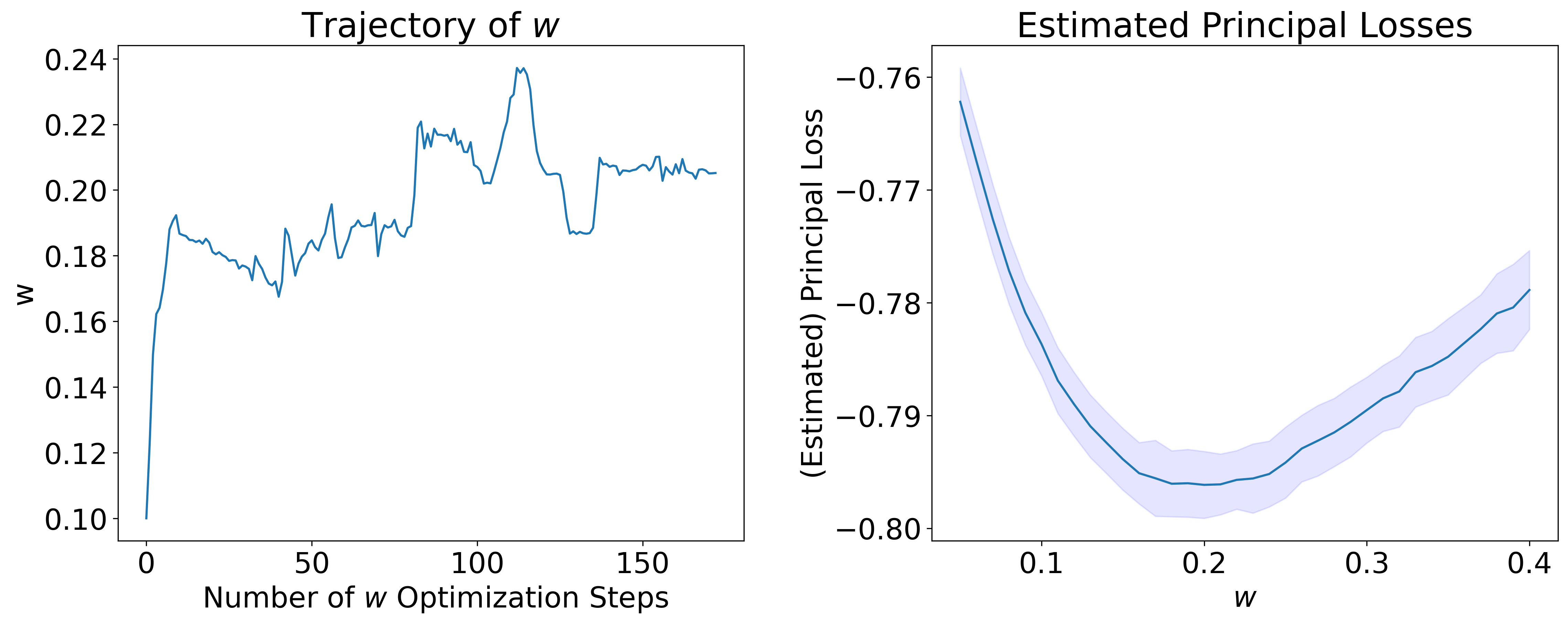}
\end{center}
\caption{Trajectory of $w$ during optimization (left) and estimated principal loss as a function of the weight $w$ (right) within the single knot test case ($R_1 = 0.9)$. Parameters as in \Cref{tab:compliance.parameters} and \Cref{tab:model.parameters}.}
\label{fig:1.node.principal.loss.and.w.trajectory}
\end{figure}

\subsection{Multiple knot non-compliance penalty}

The single knot example discussed above provides some level of evidence that our algorithm can properly identify the optimal non-compliance penalty of the principal. We now progress to a more involved example. In this subsection we fix $10$ nodes from $0.8$ to $1.16$ with an equal spacing of $0.04$ (see \Cref{tab:nodes.and.weights} for the precise knot points). With the multi-knot setting, we are able to optimize over a richer class of non-increasing convex terminal costs, and one that in fact encompasses the simple single knot setting.

In \Cref{fig:10.node.principal.loss.and.w.trajectory}, we plot the optimal weight trajectories throughout the optimization (left), as well as the principal loss trajectory (right). The principal loss converges quickly to approximately $-0.81$, while it takes longer for the weights $w_i$ themselves to converge. This indicates that the minimizer of the principal loss is located in a relatively flat region. It is worth noting that the converged $w$ has many of its coordinates close to zero. In fact, only the weights for the knots at $0.8, 0.84, 1.04$ and $1.08$ RECs are non-negligible (see \Cref{tab:nodes.and.weights}. 

\begin{figure}[ht]
\begin{center}
    \includegraphics[width=0.7\textwidth]{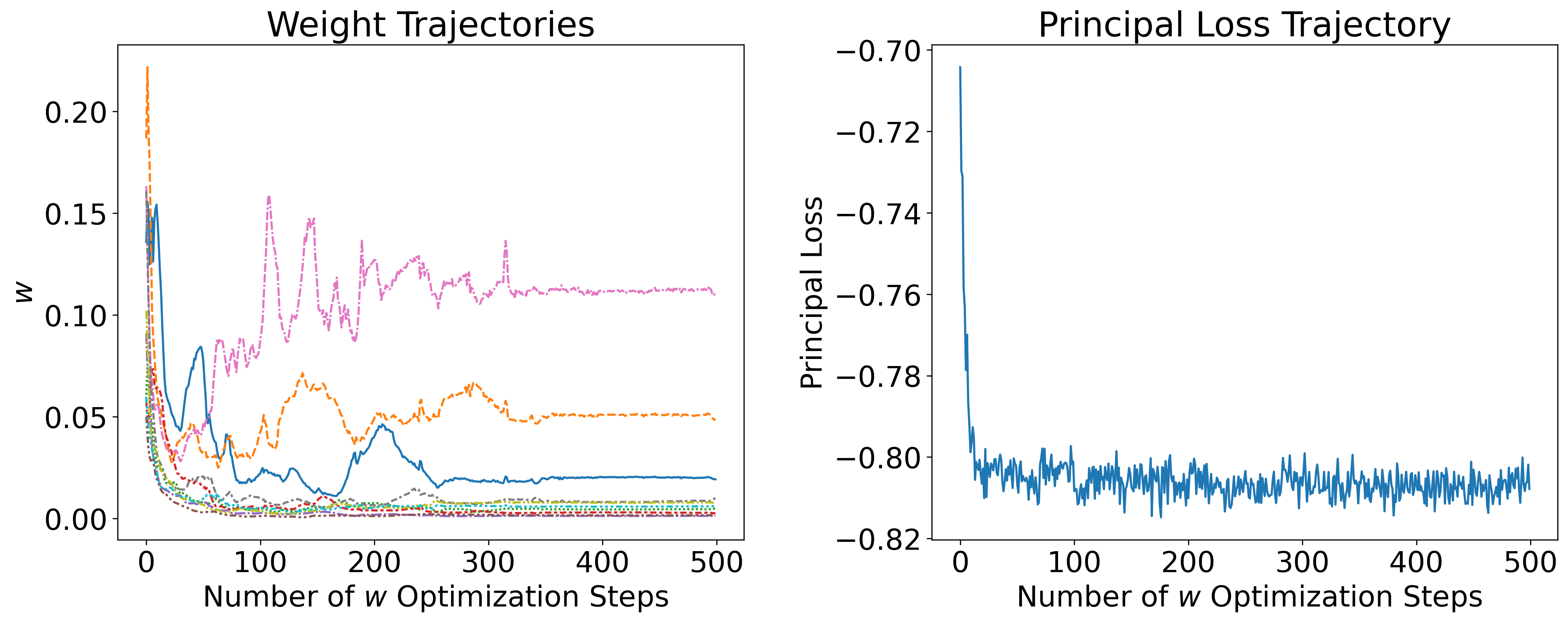}
\end{center}
\caption{Trajectory of optimal weights (left) and principal loss (right) within the multiple knot test case. Parameters as in \Cref{tab:compliance.parameters} and \Cref{tab:model.parameters}. Knots as in \Cref{tab:nodes.and.weights}.}
\label{fig:10.node.principal.loss.and.w.trajectory}
\end{figure}

\begin{table}[ht]
\centering
\begin{tabular}[t]{ccccccccccc}
\toprule\toprule
$\text{Node}$ & $0.8$ & $0.84$ & $0.88$ & $0.92$ & $0.96$ &  $1$.& $1.04$ & $1.08$ & $1.12$ & $1.16$ 
\\
$\text{Weight}$  & $0.0193$ & $0.0483$ & $0.0050$ & $0.0025$ & $0.0014$ & $0.0018$ & $0.1097$ & $0.0101$ & $0.0077$ & $0.0063$\\
\toprule\toprule\\
\end{tabular}
\caption{Knots and corresponding optimal weight in non-compliance penalty for multiple knot test case. Parameters as in \Cref{tab:compliance.parameters} and \Cref{tab:model.parameters}.}
\label{tab:nodes.and.weights}
\end{table}%

We use the output of the optimization to plot the optimal non-compliance penalty of the principal, and the distribution of induced non-compliance costs across the samples of representative agents from the differing sub-populations in \Cref{fig:10.node.penalty}. Note that the samples do not represent a finite set of agents competing in the same market and rather represent samples from the (approximate) mean-field distribution.

We observe that the non-compliance penalty $g$ is much steeper for $x<1.04$ than that for $x \geq 1.04$ , indicating a higher marginal penalty for insufficient inventory below $1.04$. Turning our attention to the right panel in \Cref{fig:10.node.penalty}, we see that a representative firm from population $2$ (blue) pays more on average in non-compliance penalties than one from population $1$ (red).\seb{perhaps has more bins in this range.. perhaps 25 or 31? it may reveal more interesting structure.} 
\begin{figure}[ht]
\begin{center}
    \includegraphics[width=0.7\textwidth]{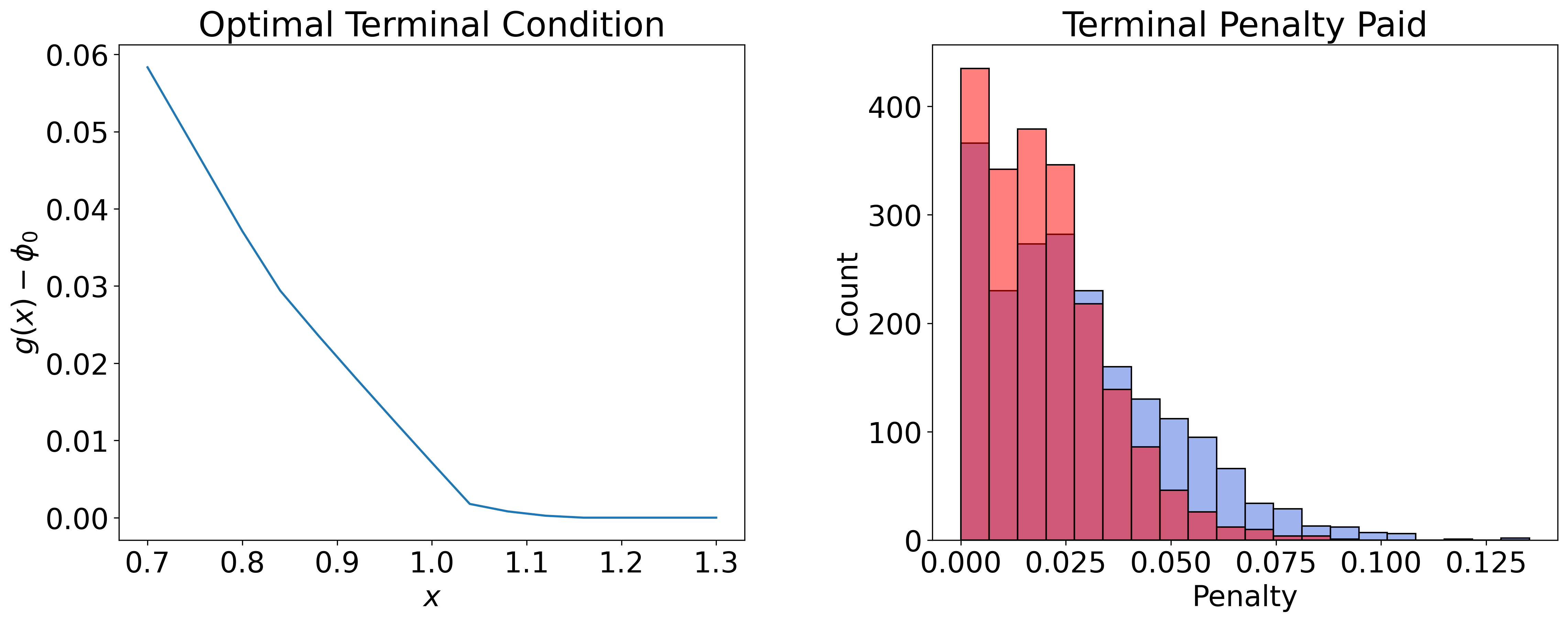}
\end{center}
\caption{Optimal terminal condition (left) and distribution of non-compliance costs paid by the agents (right). Population 1 indicated in red, population 2 indicated in blue. Parameters as in \Cref{tab:compliance.parameters} and \Cref{tab:model.parameters}. Knots as in \Cref{tab:nodes.and.weights}. Parameters as in \Cref{tab:compliance.parameters} and \Cref{tab:model.parameters}.}
\label{fig:10.node.penalty}
\end{figure}

There are many other quantities of interest with respect to the two sub-populations, such as their controls and state processes. In particular, we are interested in how they evolve across the period, as well as their distributions at $T$.  

In \Cref{fig:10.node.inventory.plots}, we plot the sampled trajectories of the inventories for representative agents in each sub-population, as well as a histogram of their terminal inventory, with the non-negligible knot points indicated by dashed lines. In both sub-populations, a considerable number of firms pay the highest marginal penalty for insufficient inventory, indicated by the notable probability mass to the left of the vertical line $x = 0.8$ in the right panel. We also include various terminal inventory percentiles for each sub-population in \Cref{tab:precentiles.terminal.inventories}.

\begin{figure}[ht]
\begin{center}
    \includegraphics[width=0.7\textwidth]{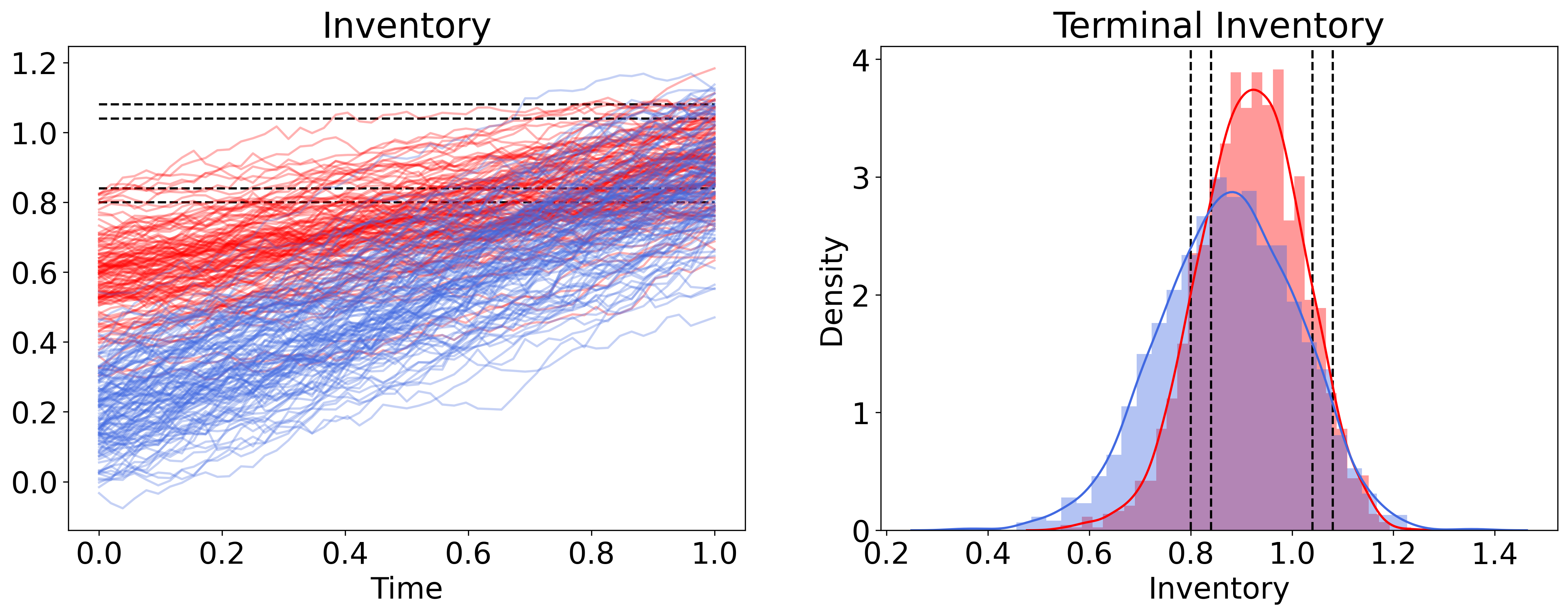}
\end{center}
 \vspace*{-3mm}
\caption{Inventory paths (left) and terminal inventory (right). Population 1 indicated in red, population 2 indicated in blue. The lines denote the non-negligible nodes in the terminal penalty $g$.}
\label{fig:10.node.inventory.plots}
\end{figure}

\begin{table}[ht]

\centering
\begin{tabular}[t]{cccccc}
\toprule\toprule
$\text{Sub-population}$ & $10$-th percentile & $25$-th percentile & $50$-th percentile& $75$-th percentile & $90$-th percentile\\
\hline
$k=1$ &$0.79$&$0.85$& $0.92$ & $0.99$ &$1.05$\\
\hline
$k=2$ &$0.7$&$0.78$& $0.88$ & $0.97$ &$1.05$\\
\toprule\toprule\\
\end{tabular}
 \vspace*{-3mm}
\caption{Percentiles of the terminal inventories across sub-populations. Parameters as in \Cref{tab:compliance.parameters} and \Cref{tab:model.parameters}.}
\label{tab:precentiles.terminal.inventories}
\end{table}%

Interestingly, we observe from \Cref{fig:10.node.inventory.plots} that representative agents from population 1 had a significantly higher average initial inventory than population 2, but their terminal inventories are far closer. This occurs for a variety of reasons. For one, representative agents from population 2 are induced more to acquire RECs due to the nature of the compliance penalty. They also have a much greater initial baseline of REC generation ($h_t^k)$, and a lower rental cost ($\zeta^k)$, allowing them to acquire RECs more quickly.

By plotting the measure flow for each sub-population across time, as in \Cref{fig:initial.and.terminal.inventory}, we can see this behavior more clearly. We can also see that the mean of the inventory distribution for population $1$ shifts to the right as time progressed, with the standard deviation of the distribution appearing fairly constant. On the other hand, the mean of the inventory distribution for population $2$ shifts to the right more aggressively as time goes to $1$, with the standard deviation of the distribution increasing. This suggests greater variance in the REC acquisition rate of a representative firm from population $2$. 

\begin{figure}[ht]
\begin{center}
    \includegraphics[width=0.7\textwidth]{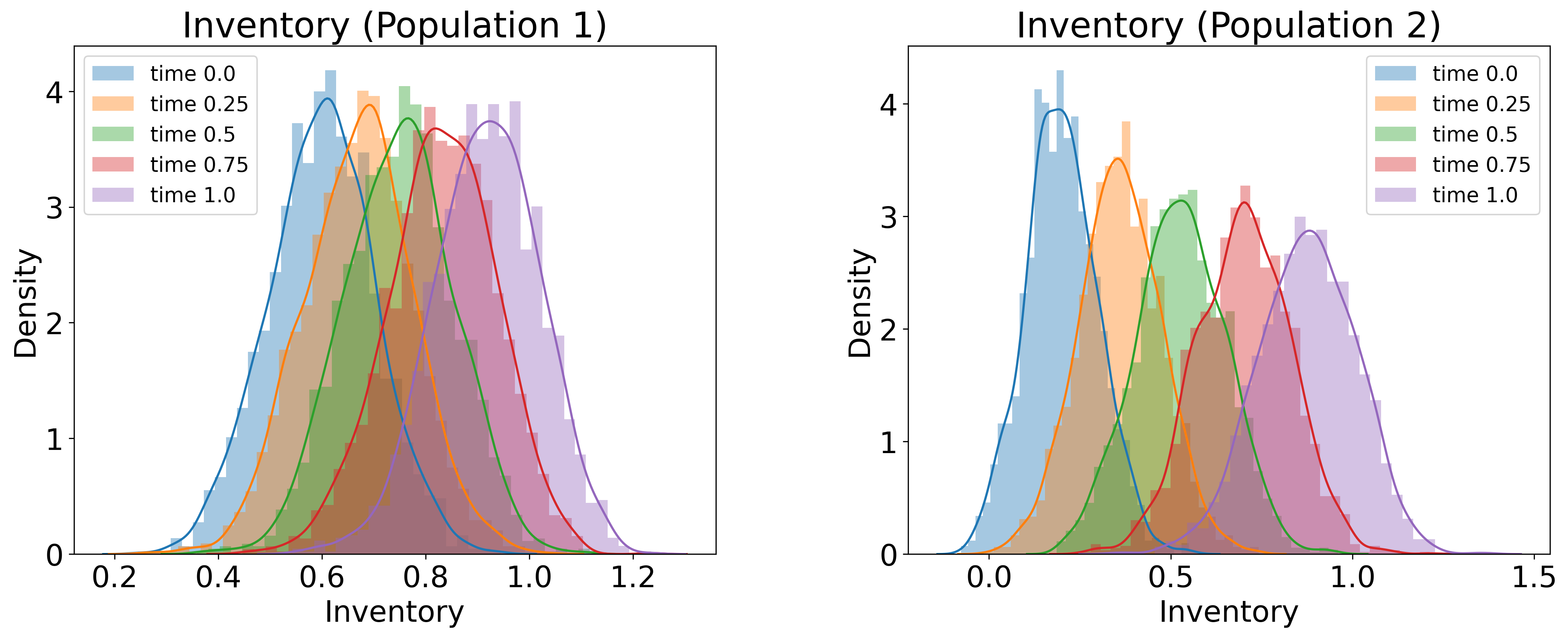}
\end{center}
 \vspace*{-3mm}
\caption{Inventories for population $1$ (left) and population $2$ (right) throughout time. Parameters as in \Cref{tab:compliance.parameters} and \Cref{tab:model.parameters}.}
\label{fig:initial.and.terminal.inventory}
\end{figure}

We now next investigate the agents' optimal controls. Firms may obtain RECs in one of three ways: rental generation, expansion generation, and trading. We wish to understand which of these primarily drove REC acquisitions, and to that end, we plot the acquisition rates and totals for each sub-population across each of these methods in \Cref{fig:inventory.and.expansion}. 

\begin{figure}[ht]
\begin{center}
    \includegraphics[width=0.7\textwidth]{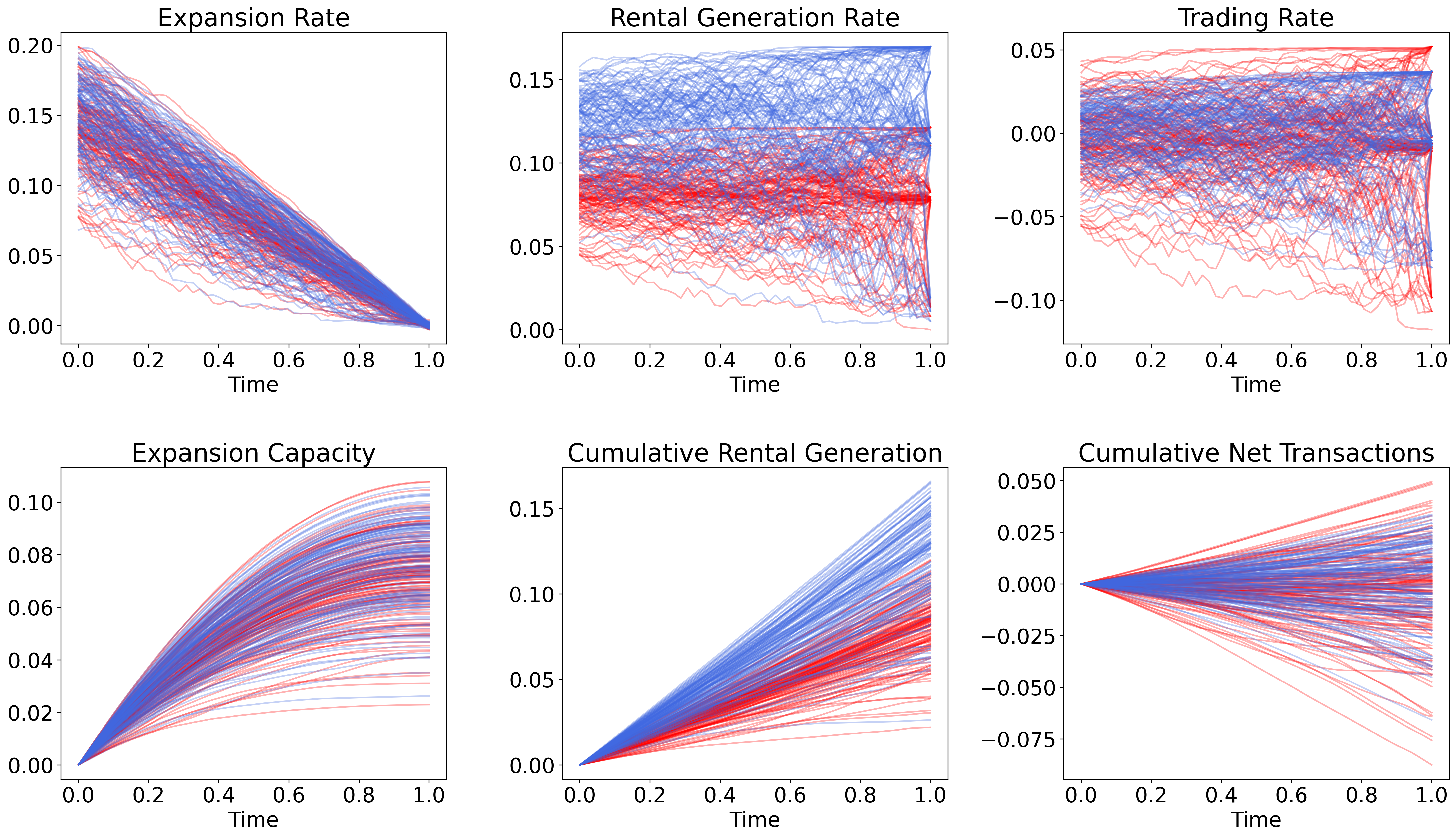}
\end{center}
 \vspace*{-3mm}
\caption{Expansion rate and total expansion (left column), rental generation rate and cumulative rental generation (middle column), trading rate and net trading position (right column) across sampled paths of representative agent from each sub-population. Population 1 indicated by red, population 2 indicated by blue. Parameters as in \Cref{tab:compliance.parameters} and \Cref{tab:model.parameters}.}
\label{fig:inventory.and.expansion}
\end{figure}

From the left panels of \Cref{fig:inventory.and.expansion}, we observe expansion rates for each sub-population taper off to 0 as time progresses, as expected. Firms accrue less benefit from expanding as the terminal time approaches, as they have less time to reap the fruits of their labor. We find the sampled representative firms from sub-population 2 generally expand more than those from sub-population 1, despite the fact that they have the same cost parameters associated with expansion ($\beta^k$). Again, this is expected due to their generally lower initial inventory.

From the middle panels, we see similar behaviour with rental generation. The rental generation rates for population $2$ were greater than the generation rates for population $1$ throughout the time horizon, resulting in greater cumulative rental generation. This is expected due to the lower cost parameter firms from population 2 have for rental generation, and their lower initial inventory incentivizing them to acquire RECs quickly.

From the right panels, we observe the net trading volume for population $2$ was bigger than that of population $1$. This means representative firms from population $1$ were more likely to be net buyers of RECs than representative firms from population $0$. Taken together, these plots provide reasoning for and internal consistency with the previously exemplified behavior of a representative firm from sub-population $2$ generally acquiring more RECs than one from sub-population $1$.

\begin{figure}[ht]
\begin{center}
    \includegraphics[width=0.6\textwidth]{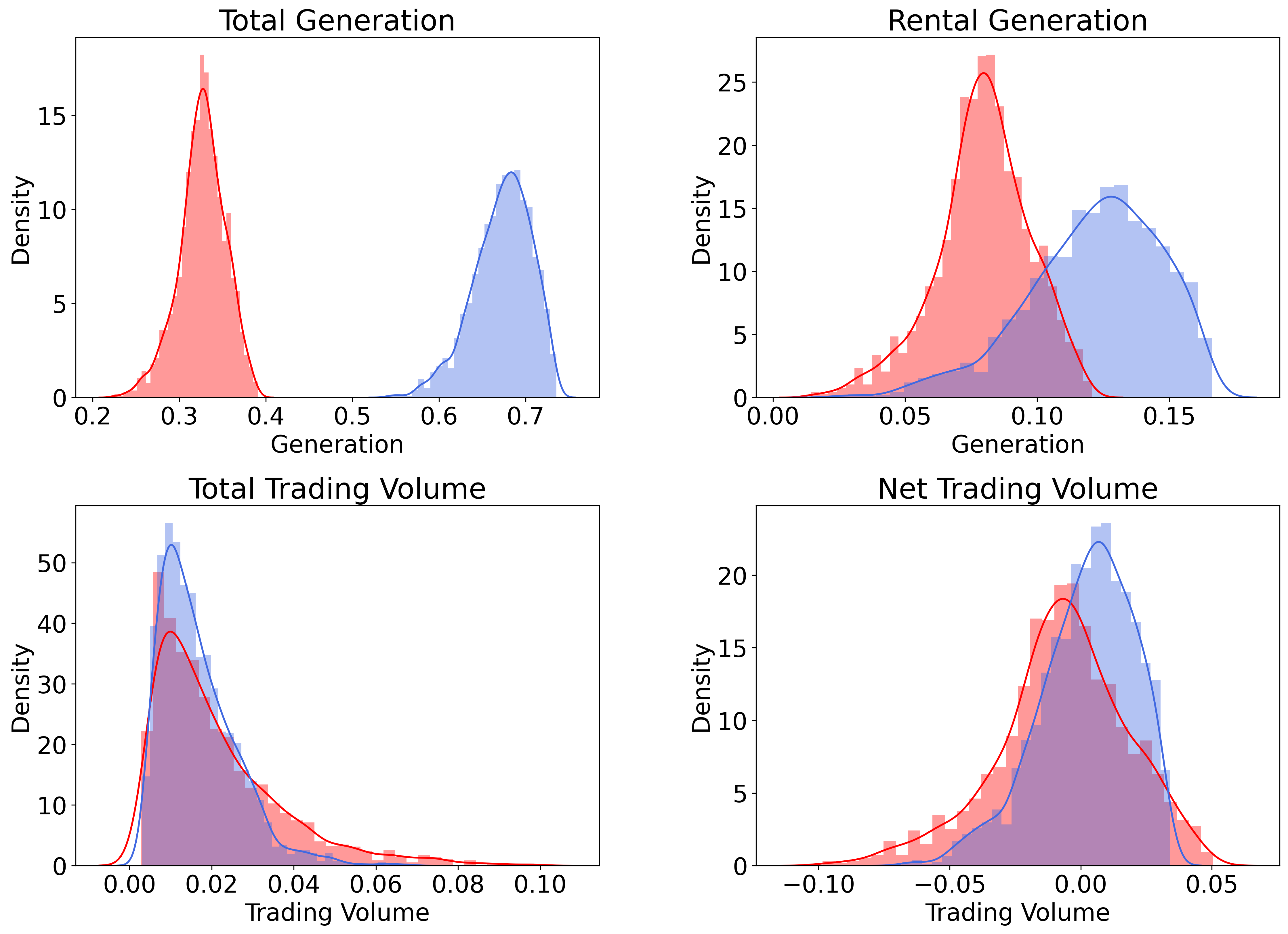}
\end{center}
 \vspace*{-3mm}
\caption{Total generation(top left), rental generation(top right), total trading volume (bottom left) and net trading volume (bottom right) for Population $1$ in red and Population $2$ in blue.}
\label{fig:total.generation.and.trading}
\end{figure}

Figure \Cref{fig:total.generation.and.trading} plots the distribution across samples of representative firms from each sub-population of total generation, rental generation, total trading volume, and rental trading volume. The increased variance in total generation for population $2$ is apparent, which is consistent with the variance of the inventory of a representative firm from said population increasing in time, as seen in \Cref{fig:initial.and.terminal.inventory}.

Finally, with the same argument as in  \cite{shrivats2020mean}, one can show that the equilibrium price process is constant. This fact is observed in \Cref{fig:price.path}, and can be intuited by noting in \eqref{eqn:price.path.sol} that the equilibrium price is in terms of the expectation of the adjoint processes $(Y_t^{(k),X})_{t\in\mathfrak{T}}$ which are, in fact, martingales.
 
\begin{figure}[ht]
\begin{center}
    \includegraphics[width=0.3\textwidth]{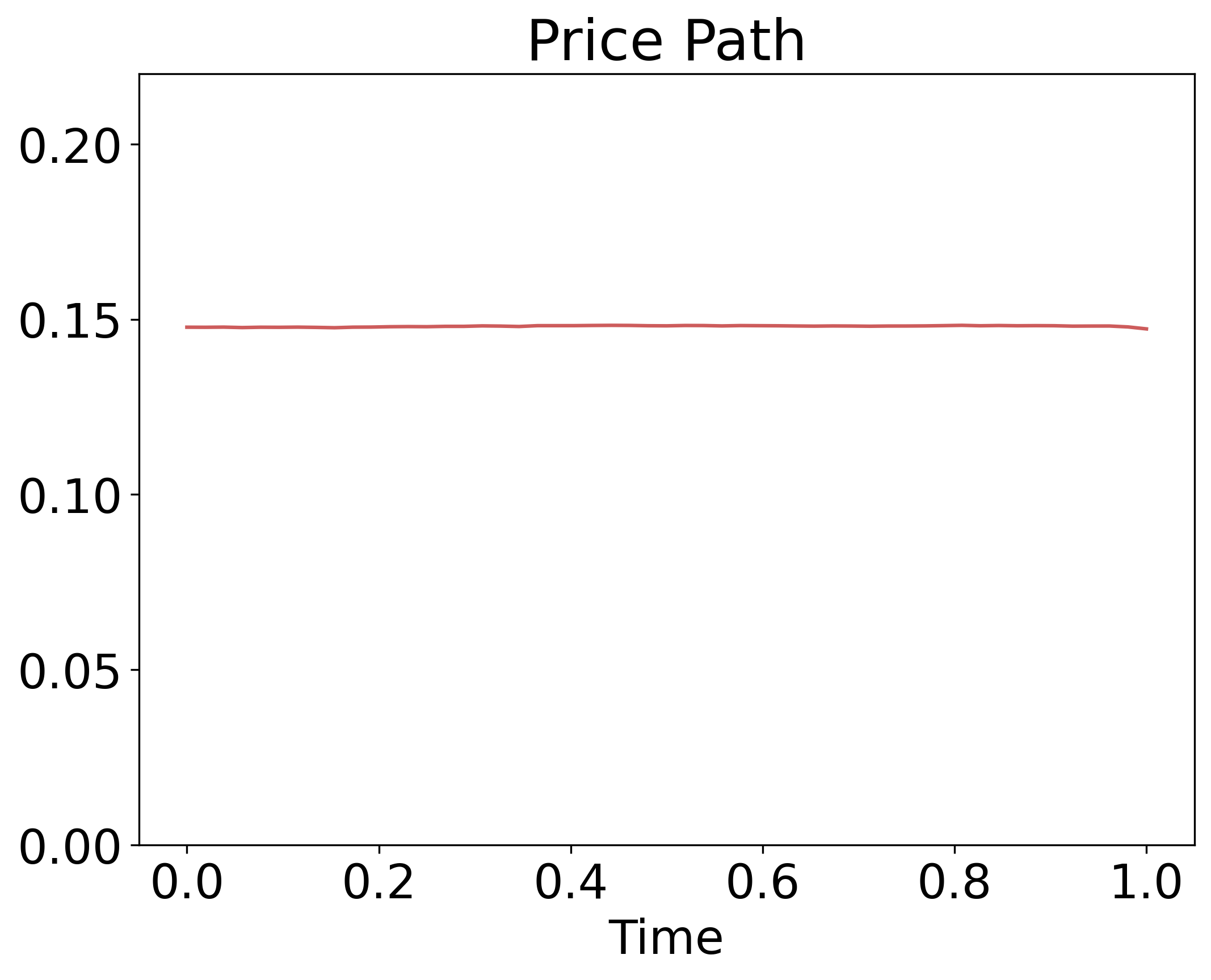}
\end{center}
 \vspace*{-3mm}
\caption{Approximate equilibrium price path.}
\label{fig:price.path}
\end{figure}

\FloatBarrier

\section{Conclusion}
In this paper we introduced a multi-step DL algorithm for solving Principal-Agent mean field Games (PA-MFGs) and illustrated its use on a previously intractable problem in REC markets that involves market clearing. This application demonstrates the flexibility of our method and leads to several insights into the nature of  equilibria in REC markets which are of independent interest. This study also suggests several future directions which we describe below.
\begin{enumerate}
	\item[(i)] This work leaves open the question of convergence guarantees for the numerical algorithm presented in Section \ref{sec:algorithm}. As in the analysis conducted in \cite{carmona2019convergence,carmona2021convergence}, it is of interest to understand how well, and under what assumptions, our numerical scheme approximates the solution of the original PA-MFG problem. \cite{han2020deep} lays a theoretical foundation for the deep BSDE method in the general case of coupled FBSDEs, and show that the error converges to zero given the universal approximation capability of NNs. This could be a good starting point for convergence results of our proposed algorithm, as our algorithm for PA-MFG problem is an alternating optimization whose inner loop involves solving coupled FBSDEs with a similar deep neural net approach.
	\item[(ii)] In Section \ref{sec:example} we introduce a stylistic model for SREC markets. In the setting we provide, we only consider a single compliance period. In practice, firms and regulators in these markets will interact over consecutive periods. Allowing for interaction across these periods through the banking of REC inventories is an important extension that will bridge the gap between the existing single period theory, and the multi-period setting that arises in practice. One clear change that we anticipate is that the rate at which agents expand their capacity will no longer be curtailed at the end of the first period as the new capacity can now be carried through time. This allows agents to ``plan ahead" and anticipate their obligations in future periods.
\end{enumerate}
Nonetheless, the problem formulation, algorithm, and example presented in this work represent a clear step forward within the exciting field of PA-MFGs and their solutions, as well as their tantalizing applicability to the world of environmental regulation and beyond.


\bibliographystyle{siam}%
\bibliography{references}

\begin{thebibliography}{10}

\bibitem{aid2017coordination}
{\sc R.~A{\"\i}d, M.~Basei, and H.~Pham}, {\em A {M}c{K}ean-{V}lasov approach
  to distributed electricity generation development}, arXiv preprint
  arXiv:1705.01302v3,  (2019).

\bibitem{aziz2016mean}
{\sc M.~Aziz and P.~E. Caines}, {\em A mean field game computational
  methodology for decentralized cellular network optimization}, IEEE
  transactions on control systems technology, 25 (2016), pp.~563--576.

\bibitem{beck2019machine}
{\sc C.~Beck, E.~Weinan, and A.~Jentzen}, {\em Machine learning approximation
  algorithms for high-dimensional fully nonlinear partial differential
  equations and second-order backward stochastic differential equations},
  Journal of Nonlinear Science, 29 (2019), pp.~1563--1619.

\bibitem{brown2017oil}
{\sc I.~Brown, J.~Funk, and R.~Sircar}, {\em Oil prices \& dynamic games under
  stochastic demand}, Available at SSRN 3047390,  (2017).

\bibitem{Cardaliaguet2018}
{\sc P.~Cardaliaguet and C.-A. Lehalle}, {\em Mean field game of controls and
  an application to trade crowding}, Mathematics and Financial Economics, 12
  (2018), pp.~335--363.

\bibitem{carmona2013probabilistic}
{\sc R.~Carmona and F.~Delarue}, {\em Probabilistic analysis of mean-field
  games}, SIAM Journal on Control and Optimization, 51 (2013), pp.~2705--2734.

\bibitem{carmona2015forward}
\leavevmode\vrule height 2pt depth -1.6pt width 23pt, {\em Forward--backward
  stochastic differential equations and controlled mckean--vlasov dynamics},
  The Annals of Probability, 43 (2015), pp.~2647--2700.

\bibitem{carmona2018probabilistic}
\leavevmode\vrule height 2pt depth -1.6pt width 23pt, {\em Probabilistic Theory
  of Mean Field Games with Applications II: Mean Field Games with Common Noise
  and Master Equations}, vol.~84, Springer, 2018.

\bibitem{carmona2009optimal}
{\sc R.~Carmona, M.~Fehr, and J.~Hinz}, {\em Optimal stochastic control and
  carbon price formation}, SIAM Journal on Control and Optimization, 48 (2009),
  pp.~2168--2190.

\bibitem{carmona2010market}
{\sc R.~Carmona, M.~Fehr, J.~Hinz, and A.~Porchet}, {\em Market design for
  emission trading schemes}, Siam Review, 52 (2010), pp.~403--452.

\bibitem{CarmonaLacker2015}
{\sc R.~Carmona and D.~Lacker}, {\em A probabilistic weak formulation of mean
  field games and applications}, The Annals of Applied Probability, 25 (2015),
  pp.~1189--1231.

\bibitem{carmona2019convergence}
{\sc R.~Carmona and M.~Lauri{\`e}re}, {\em Convergence analysis of machine
  learning algorithms for the numerical solution of mean field control and
  games: Ii--the finite horizon case}, arXiv preprint arXiv:1908.01613,
  (2019).

\bibitem{carmona2021convergence}
\leavevmode\vrule height 2pt depth -1.6pt width 23pt, {\em Convergence analysis
  of machine learning algorithms for the numerical solution of mean field
  control and games i: The ergodic case}, SIAM Journal on Numerical Analysis,
  59 (2021), pp.~1455--1485.

\bibitem{carmona2021deep}
\leavevmode\vrule height 2pt depth -1.6pt width 23pt, {\em Deep learning for
  mean field games and mean field control with applications to finance}, arXiv
  preprint arXiv:2107.04568,  (2021).

\bibitem{carmona2021finite}
{\sc R.~Carmona and P.~Wang}, {\em Finite-state contract theory with a
  principal and a field of agents}, Management Science, 67 (2021),
  pp.~4725--4741.

\bibitem{casgrain2018mean}
{\sc P.~Casgrain and S.~Jaimungal}, {\em Mean field games with partial
  information for algorithmic trading}, arXiv preprint arXiv:1803.04094,
  (2018).

\bibitem{casgrain2020mean}
\leavevmode\vrule height 2pt depth -1.6pt width 23pt, {\em Mean-field games
  with differing beliefs for algorithmic trading}, Mathematical Finance, 30
  (2020), pp.~995--1034.

\bibitem{Sircar2017}
{\sc P.~Chan and R.~Sircar}, {\em Fracking, renewables, and mean field games},
  SIAM Review, 59 (2017), pp.~588--615.

\bibitem{coulon2015smart}
{\sc M.~Coulon, J.~Khazaei, and W.~B. Powell}, {\em Smart-srec: A stochastic
  model of the new jersey solar renewable energy certificate market}, Journal
  of Environmental Economics and Management, 73 (2015), pp.~13--31.

\bibitem{cvitanic2017moral}
{\sc J.~Cvitani{\'c}, D.~Possama{\"\i}, and N.~Touzi}, {\em Moral hazard in
  dynamic risk management}, Management Science, 63 (2017), pp.~3328--3346.

\bibitem{cvitanic2018dynamic}
\leavevmode\vrule height 2pt depth -1.6pt width 23pt, {\em Dynamic programming
  approach to principal--agent problems}, Finance and Stochastics, 22 (2018),
  pp.~1--37.

\bibitem{Tembine2017}
{\sc B.~Djehiche, A.~Tcheukam, and H.~Tembine}, {\em Mean-field-type games in
  engineering}, arXiv preprint arXiv:1605.03281,  (2017).

\bibitem{elie2019tale}
{\sc R.~Elie, T.~Mastrolia, and D.~Possama{\"\i}}, {\em A tale of a principal
  and many, many agents}, Mathematics of Operations Research, 44 (2019),
  pp.~440--467.

\bibitem{David-Yuri2020}
{\sc D.~Evangelista and Y.~Thamsten}, {\em Finite population games of optimal
  execution}, arXiv,  (2020).

\bibitem{ThesisDena2019}
{\sc D.~Firoozi}, {\em Mean field games and optimal execution problems: hybrid
  and partially observed major minor systems}, PhD Thesis, McGill University,
  2019.

\bibitem{FirooziISDG2017}
{\sc D.~Firoozi and P.~E. Caines}, {\em The execution problem in finance with
  major and minor traders: A mean field game formulation}, in Annals of the
  International Society of Dynamic Games (ISDG): Advances in Dynamic and Mean
  Field Games, vol.~15, Birkh\"auser Basel, 2017, pp.~107--130.

\bibitem{firoozi2020convex}
{\sc D.~Firoozi, S.~Jaimungal, and P.~E. Caines}, {\em Convex analysis for lqg
  systems with applications to major--minor lqg mean--field game systems},
  Systems \& Control Letters, 142 (2020), p.~104734.

\bibitem{FirooziPakniyatCainesCDC2017}
{\sc D.~Firoozi, A.~Pakniyat, and P.~E. Caines}, {\em A mean field game -
  hybrid systems approach to optimal execution problems in finance with
  stopping times}, in Proceedings of the 56th IEEE Conference on Decision and
  Control (CDC), Melbourne, Australia, Dec. 2017, pp.~3144--3151.

\bibitem{fouque2020deep}
{\sc J.-P. Fouque and Z.~Zhang}, {\em Deep learning methods for mean field
  control problems with delay}, Frontiers in Applied Mathematics and
  Statistics, 6 (2020), p.~11.

\bibitem{Horst2018}
{\sc G.~Fu, P.~Graewe, U.~Horst, and A.~Popier}, {\em A mean field game of
  optimal portfolio liquidation}, arXiv preprint arXiv:1804.04911,  (2018).

\bibitem{fujii2020mean}
{\sc M.~Fujii and A.~Takahashi}, {\em A mean field game approach to equilibrium
  pricing with market clearing condition}, arXiv preprint arXiv:2003.03035,
  (2020).

\bibitem{germain2019numerical}
{\sc M.~Germain, J.~Mikael, and X.~Warin}, {\em Numerical resolution of
  mckean-vlasov fbsdes using neural networks}, arXiv preprint arXiv:1909.12678,
   (2019).

\bibitem{Gomes2016}
{\sc D.~{Gomes}, L.~{Lafleche}, and L.~{Nurbekyan}}, {\em A mean-field game
  economic growth model}, in 2016 American Control Conference (ACC), 2016,
  pp.~4693--4698.

\bibitem{gomes2018mean}
{\sc D.~Gomes and J.~Sa{\'u}de}, {\em A mean-field game approach to price
  formation in electricity markets}, arXiv preprint arXiv:1807.07088,  (2018).

\bibitem{Gomes2014}
{\sc D.~Gomes, R.~M. Velho, and M.-T. Wolfram}, {\em Socio-economic
  applications of finite state mean field games}, Philosophical Transactions of
  the Royal Society A: Mathematical, Physical and Engineering Sciences, 372
  (2014), pp.~1--17.

\bibitem{Mouzouni2019}
{\sc P.~J. Graber and C.~Mouzouni}, {\em On mean field games models for
  exhaustible commodities trade}, (To appear) ESAIM: COCV,  (2019).

\bibitem{han2020deep}
{\sc J.~Han and R.~Hu}, {\em Deep fictitious play for finding markovian nash
  equilibrium in multi-agent games}, in Mathematical and Scientific Machine
  Learning, PMLR, 2020, pp.~221--245.

\bibitem{han2020convergence}
{\sc J.~Han, R.~Hu, and J.~Long}, {\em Convergence of deep fictitious play for
  stochastic differential games}, arXiv preprint arXiv:2008.05519,  (2020).

\bibitem{han2018solving}
{\sc J.~Han, A.~Jentzen, and E.~Weinan}, {\em Solving high-dimensional partial
  differential equations using deep learning}, Proceedings of the National
  Academy of Sciences, 115 (2018), pp.~8505--8510.

\bibitem{han2020convergencebsdes}
{\sc J.~Han and J.~Long}, {\em Convergence of the deep bsde method for coupled
  fbsdes}, Probability, Uncertainty and Quantitative Risk, 5 (2020), pp.~1--33.

\bibitem{holmstrom1987aggregation}
{\sc B.~Holmstrom and P.~Milgrom}, {\em Aggregation and linearity in the
  provision of intertemporal incentives}, Econometrica: Journal of the
  Econometric Society,  (1987), pp.~303--328.

\bibitem{huang2007large}
{\sc M.~Huang, P.~E. Caines, and R.~P. Malham{\'e}}, {\em Large-population
  cost-coupled {LQG} problems with nonuniform agents: individual-mass behavior
  and decentralized $\epsilon$-{N}ash equilibria}, IEEE transactions on
  automatic control, 52 (2007), pp.~1560--1571.

\bibitem{huang2006large}
{\sc M.~Huang, R.~P. Malham{\'e}, P.~E. Caines, et~al.}, {\em Large population
  stochastic dynamic games: closed-loop {M}c{K}ean-{V}lasov systems and the
  {N}ash certainty equivalence principle}, Communications in Information \&
  Systems, 6 (2006), pp.~221--252.

\bibitem{Mojtaba2015}
{\sc X.~Huang, S.~Jaimungal, and M.~Nourian}, {\em Mean-field game strategies
  for optimal execution}, Applied Mathematical Finance, 26 (2019),
  pp.~153--185.

\bibitem{khazaei2017adapt}
{\sc J.~Khazaei, M.~Coulon, and W.~B. Powell}, {\em Adapt: A price-stabilizing
  compliance policy for renewable energy certificates: The case of srec
  markets}, Operations Research, 65 (2017), pp.~1429--1445.

\bibitem{KIZILKALE2019}
{\sc A.~C. Kizilkale, R.~Salhab, and R.~P. Malham{\'e}}, {\em An integral
  control formulation of mean field game based large scale coordination of
  loads in smart grids}, Automatica, 100 (2019), pp.~312 -- 322.

\bibitem{Lasry2006a}
{\sc J.~M. Lasry and P.~L. Lions}, {\em Jeux \`a champ moyen. i - le cas
  stationnaire}, Comptes Rendus de l'Acad\'emie des Sciences, 343 (2006),
  pp.~619--625.

\bibitem{Lasry2006b}
\leavevmode\vrule height 2pt depth -1.6pt width 23pt, {\em Jeux \`a champ
  moyen. ii - horizon fini et contr\^ole optimal}, Comptes Rendus de
  l'Acad\'emie des Sciences, 343 (2006), pp.~679--684.

\bibitem{Lasry2007mean}
{\sc J.-M. Lasry and P.-L. Lions}, {\em Mean field games}, Japanese journal of
  mathematics, 2 (2007), pp.~229--260.

\bibitem{Lehalle2019}
{\sc C.-A. Lehalle and C.~Mouzouni}, {\em A mean field game of portfolio
  trading and its consequences on perceived correlations}, arXiv preprint
  arXiv:1902.09606,  (2019).

\bibitem{ludkovski2017mean}
{\sc M.~Ludkovski and X.~Yang}, {\em Mean field game approach to production and
  exploration of exhaustible commodities}, arXiv preprint arXiv:1710.05131,
  (2017).

\bibitem{marsden1972uniform}
{\sc M.~J. Marsden}, {\em On uniform spline approximation}, Journal of
  Approximation Theory, 6 (1972), pp.~249--253.

\bibitem{sannikov2008continuous}
{\sc Y.~Sannikov}, {\em A continuous-time version of the principal-agent
  problem}, The Review of Economic Studies, 75 (2008), pp.~957--984.

\bibitem{seifert2008dynamic}
{\sc J.~Seifert, M.~Uhrig-Homburg, and M.~Wagner}, {\em Dynamic behavior of co2
  spot prices}, Journal of Environmental Economics and Management, 56 (2008),
  pp.~180--194.

\bibitem{shrivats2020mean}
{\sc A.~Shrivats, D.~Firoozi, and S.~Jaimungal}, {\em A mean-field game
  approach to equilibrium pricing, optimal generation, and trading in solar
  renewable energy certificate ({srec}) markets}, arXiv preprint
  arXiv:2003.04938,  (2020).

\bibitem{shrivats2020optimal}
{\sc A.~Shrivats and S.~Jaimungal}, {\em Optimal generation and trading in
  solar renewable energy certificate (srec) markets}, Applied Mathematical
  Finance, 27 (2020), pp.~99--131.

\bibitem{weinan2017deep}
{\sc E.~Weinan, J.~Han, and A.~Jentzen}, {\em Deep learning-based numerical
  methods for high-dimensional parabolic partial differential equations and
  backward stochastic differential equations}, Communications in Mathematics
  and Statistics, 5 (2017), pp.~349--380.

\end{thebibliography}

\end{document}